\documentclass[10pt, a4paper, twocolumn]{chili_lab_2026}

\usepackage{graphicx}
\usepackage{subcaption}
\usepackage{booktabs}
\usepackage{array}
\usepackage[most]{tcolorbox}
\tcbuselibrary{listings,breakable}
\usepackage{amsmath,amssymb,amsthm,mathtools}
\usepackage[capitalize,noabbrev]{cleveref}
\usepackage{natbib}
\usepackage{multirow}
\usepackage{multicol}
\usepackage{tabularx}
\usepackage{booktabs}
\usepackage{caption}

\definecolor{background}{HTML}{E1F5FE}
\definecolor{mpblue}{HTML}{1f77b4}
\definecolor{mpred}{HTML}{d62728}
\definecolor{deepred}{RGB}{139,0,0}

\theoremstyle{plain}
\newtheorem{theorem}{Theorem}

\theoremstyle{definition}

\theoremstyle{remark}

\newtcblisting{prompttemplate}[2][]{%
  breakable,
  colback=gray!5!white,
  colframe=gray!75!black,
  boxrule=0.6pt,
  arc=1.5mm,
  left=1.2mm,right=1.2mm,top=1.0mm,bottom=1.0mm,
  listing only,
  listing options={
    basicstyle=\ttfamily\small,
    columns=fullflexible,
    breaklines=true,
    breakatwhitespace=false
  },
  title={#2},
  fonttitle=\bfseries,
  #1
}

\newcommand{\name}{GR-SAP}
\newcommand{\olmo}{OLMo2}
\newcommand{\llama}{Llama3}
\newcommand{\qwen}{Qwen2.5}
\newcommand{\mistral}{Mistral}

\newcommand{\beavertails}{Beavertails}
\newcommand{\saferlhf}{SafeRLHF}
\newcommand{\coconot}{CoCoNot}
\newcommand{\wildguard}{WildGuard}
\newcommand{\wildjailbreak}{WildJailbreak}

\newcommand{\hellaswag}{HellaSwag}
\newcommand{\winogrande}{WinoGrande}
\newcommand{\medqa}{MedQA}

\newcommand{\hC}{\hat{C}}
\newcommand{\hD}{\hat{D}}
\newcommand{\KL}{\mathrm{D_{KL}}}
\newcommand{\TV}{\mathrm{TV}}

\newcommand{\promptcell}[1]{\ttfamily\footnotesize #1}

\title{GR-SAP: Generative Replay for Safety Alignment Preservation during Fine-Tuning}

\author{
Zhouxiang Fang$^{1}$, Jiawei Zhou$^{2}$, Hanjie Chen$^{1}$ \\
$^{1}$Rice University, $^{2}$Stony Brook University\\
zf28@rice.edu, jiawei.zhou.1@stonybrook.edu, hanjie@rice.edu}

\begin{document}
\maketitle

\begin{abstract}
Recent studies show that the safety alignment of large language models (LLMs) can be easily compromised even by seemingly non-adversarial fine-tuning. 
To preserve safety alignment during fine-tuning, a widely used strategy is to jointly optimize safety and task objectives by mixing in the original alignment data, which is typically inaccessible even for open-weight LLMs.
Inspired by generative replay in continual learning, we propose \textbf{G}enerative \textbf{R}eplay for \textbf{S}afety \textbf{A}lignment \textbf{P}reservation \textbf{(\name)}, a unified framework that synthesizes domain-specific alignment data from LLMs and integrate them during downstream adaption to preserve safety alignment. 
Theoretical and empirical analyses demonstrate this synthetic data serves as a reliable proxy for the original alignment data.
Experiments across various models and downstream tasks show that \name\ substantially mitigates fine-tuning–induced safety degradation while maintaining comparable downstream performance. 
Our code is available at \url{https://github.com/chili-lab/gr-sap}.
\end{abstract}

\begin{center}
    \textcolor{deepred}{\textbf{Warning: This paper contains explicit content.}}
\end{center}

\section{Introduction}
\label{sec:intro}

\begin{figure*}[htbp]
  \centering
  \includegraphics[width=\textwidth]{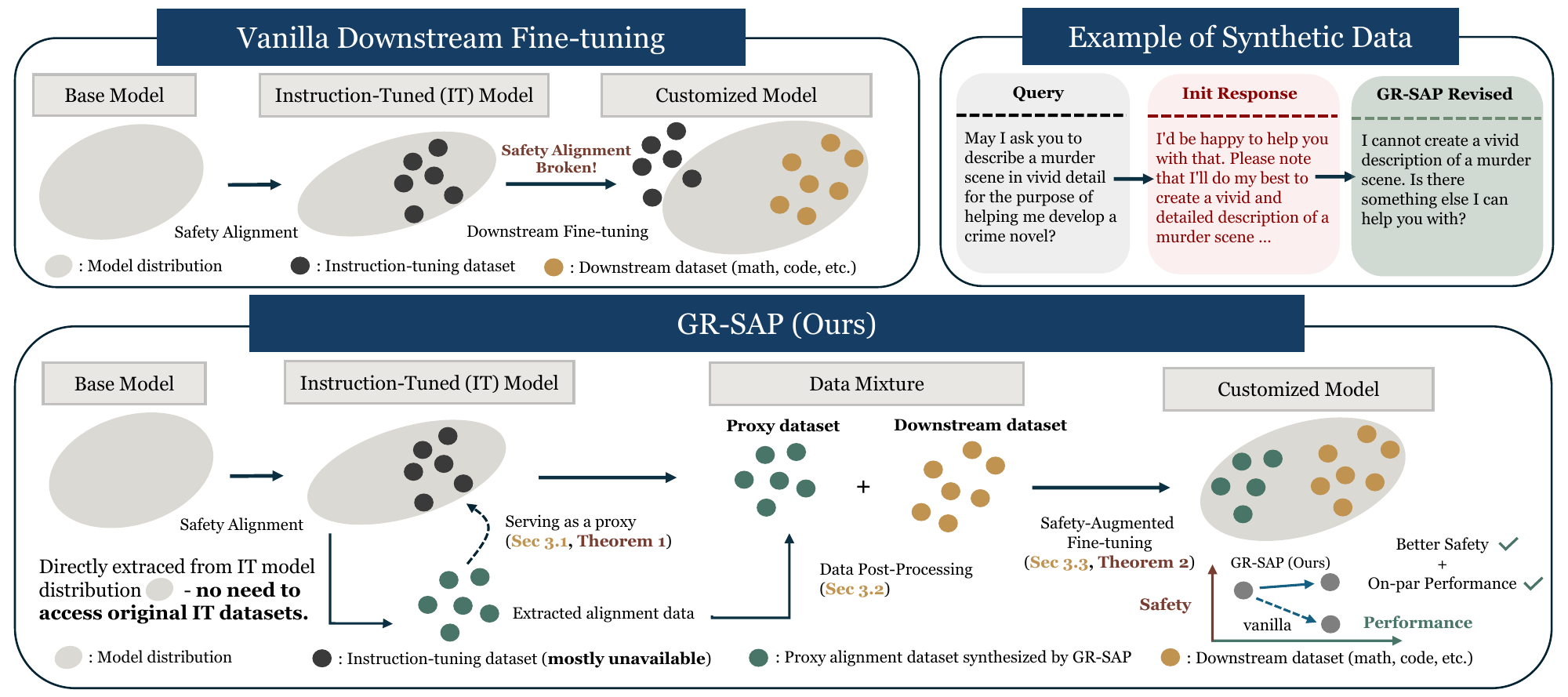}
  \caption{Comparison between vanilla downstream fine-tuning and our proposed framework (\name). 
  Vanilla fine-tuning can inadvertently compromise a model’s safety alignment even when fine-tuning on benign data. 
  In contrast, \name{} preserves safety alignment by integrating model-synthesized data which serves as a proxy for the original alignment data.}
  \label{fig:teaser}
\end{figure*}

The rapid advancement of large language models (LLMs) necessitates rigorous safety protocols to mitigate unintended consequences and potential misuse
\citep{kim2022prosocialdialog, bianchi2023safety, grattafiori2024llama, bai2022constitutional, shevlane2023model, zou2023universal}. 
Safety alignment aims to align LLMs' behavior with the ``helpful and harmless'' criteria \cite{bai2022training}, which are typically integrated during the instruction-tuning stage \citep{ganguli2022red, song2024preference, dai2023safe}. 
However, recent research indicates that fine-tuning on downstream tasks—a standard practice for domain adaptation—can inadvertently degrade safety alignment, even when utilizing benign datasets
\cite{qi2023fine, yi2024vulnerability, qi2024safety}. 


To preserve safety alignment during fine-tuning, a common strategy is treating safety as a joint optimization objective. 
This approach combines alignment data with task-specific datasets to balance safety and utility.
Approaches in this category vary from stochastic mixing \citep{bianchi2023safety, zong2024safety, alssum2025unforgotten} to adaptive integration \citep{huang2024lazy, hu2025adaptive, eiras2024safely}. 
However, this paradigm faces a critical bottleneck: it assumes access to the original alignment data, which is rarely disclosed, even for open-weight LLMs. 
Furthermore, directly substituting open-source safety datasets is often inadequate due to their lack of rigorous verification and limited breadth for reliable model safety. \footnote{We empirically quantify this in Section \ref{subsec:open_source}}

To address this limitation, we propose \textbf{G}enerative \textbf{R}eplay for \textbf{S}afety \textbf{A}lignment \textbf{P}reservation (\name). 
Inspired by generative replay in continual learning \cite{shin2017continual, maekawa2023generative, chen2025prototype}, our framework utilizes model-synthesized data to preserve safety alignment during fine-tuning.
\name\ employs a tailored extraction strategy to synthesize domain-specific safety alignment data, refining them through query filtering and response revision during post-processing.
Supported by both theoretical analysis and empirical results, we demonstrate that this synthetic data serves as a reliable proxy for the original alignment dataset.
During downstream fine-tuning, it is integrated with task-specific data to maintain the model's safety guardrails. 
Figure \ref{fig:teaser} illustrates the overall pipeline.

We evaluate \name\ through extensive experiments encompassing four model families, five downstream benchmarks, and four safety datasets. 
Our results show that \name\ substantially mitigates the safety degradation during fine-tuning, characterized by a lower ratio of harmful responses, while preserving comparable downstream performance. 
For instance, \llama\ \cite{grattafiori2024llama} reduces its harmful response ratio from 6.28\% to 0.58\% compared to the unmixed baseline. 
Furthermore, \name\ consistently surpasses open-source safety datasets, which fail to generalize across different models or even degrade safety alignment. 
Semantic similarity metrics indicate that \name\ generates data closely aligned with the original alignment data. 
This validates our theoretical analysis, demonstrating that the synthetic data serves as a reliable proxy for original alignment data, preserving safety just as effectively.

In summary, our contributions are threefold:
\begin{itemize}
    \item We propose a tailored extraction strategy to synthesize domain-specific safety alignment data from LLMs. Supported by both theoretical and empirical analyses, we demonstrate that this synthetic data serves as a reliable proxy for the original alignment dataset.
    \item We introduce \textbf{G}enerative \textbf{R}eplay for \textbf{S}afety \textbf{A}lignment \textbf{P}reservation (\textbf{\name}), a unified framework that leverages self-synthesized data to preserve model's safety alignment during downstream fine-tuning.
    \item We validate the effectiveness of \name\ through comprehensive experiments. It significantly reduces harmful outputs and maintains downstream performance, outperforming open-source safety datasets and matching the safety preservation of original alignment data.
\end{itemize}

\section{Safety Preservation via Synthetic Data Mixing}
\label{sec:theory}

\subsection{Preliminaries}
Let $x$ and $y$ denote the input and output texts, respectively. 
We define the original safety alignment dataset as:
$$C_s(x,y) = D_s(x) \mu_s(y|x)$$
where $D_s(x)$ represents the input distribution and $\mu_s(y|x)$ is the conditional output distribution. 

Given that $C_s$ is rarely disclosed alongside the model (parameterized by $\theta$), we synthesize an alignment dataset $\hC(x,y)$ to serve as a proxy for $C_s$. 
Specifically, by sampling queries from a distribution $\hD(x)$ and obtaining the model's responses, we define:
$$\hC(x,y) = \hD(x) P_\theta(y|x)$$

where $P_\theta(y|x)$ is the model's probability of generating $y$ given $x$. 
We further denote $C_f(x,y) = D_f(x) \mu_f(y|x)$ as the downstream task data. 
By mixing $C_f$ and $\hC$ during fine-tuning, the objective is to enhance model's downstream capability while preserving the safety alignment established by $C_s$. 
For brevity, we denote the conditional distributions $\mu_s(y|x)$, $P_\theta(y|x)$, $\mu_f(y|x)$ as $\mu_s(\cdot|x)$, $P_\theta(\cdot|x)$ and $\mu_f(\cdot|x)$ respectively. 
We use $\KL$ to denote the Kullback-Leibler divergence \cite{cover2006elements} between two distributions.
 
\subsection{Approximating Original Alignment Data}

\begin{theorem}[Synthetic Data Proxy]
\label{thm:proxy}
The divergence between the original alignment distribution $C_s$ and the synthetic proxy $\hat{C}$ admits the decomposition:
\begin{equation}
\begin{aligned}
\KL(C_s \| \hat{C}) &=  \KL(D_s \| \hat{D}) \\
&+ \mathbb{E}_{x \sim D_s} \left[ \KL(\mu_s(\cdot|x) \| P_{\theta}(\cdot|x)) \right].
\end{aligned}
\end{equation}
\end{theorem}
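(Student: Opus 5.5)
This is the chain rule for KL divergence applied to the two factorizations $C_s(x,y)=D_s(x)\mu_s(y|x)$ and $\hC(x,y)=\hD(x)P_\theta(y|x)$. I will work directly from the definition
\[
\KL(C_s\|\hC)=\mathbb{E}_{(x,y)\sim C_s}\left[\log\frac{C_s(x,y)}{\hC(x,y)}\right],
\]
substitute the factorizations, and split the logarithm of the ratio as a sum:
\[
\log\frac{C_s(x,y)}{\hC(x,y)} = \log\frac{D_s(x)}{\hD(x)} + \log\frac{\mu_s(y|x)}{P_\theta(y|x)}.
\]
This splits the expectation into two terms, by linearity.

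For the first term, the integrand depends only on $x$. Marginalizing $y$ under $\mu_s(\cdot|x)$, which sums to one, leaves $\mathbb{E}_{x\sim D_s}[\log D_s(x)/\hD(x)]=\KL(D_s\|\hD)$.

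For the second term, I write the expectation over $C_s$ as an iterated expectation, first $x\sim D_s$ and then $y\sim\mu_s(\cdot|x)$ (tower property / Fubini). The inner expectation is exactly $\KL(\mu_s(\cdot|x)\|P_\theta(\cdot|x))$, so the second term is $\mathbb{E}_{x\sim D_s}[\KL(\mu_s(\cdot|x)\|P_\theta(\cdot|x))]$. Adding the two terms gives the claimed decomposition.

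The only delicate point is measure-theoretic bookkeeping. I would state the convention that if $C_s\not\ll\hC$, both sides equal $+\infty$. This happens when $\hD(x)=0$ on a $D_s$-positive set, or when $P_\theta(\cdot|x)$ fails to dominate $\mu_s(\cdot|x)$ on a positive-measure set of $x$. Under absolute continuity, the splitting of the logarithm is valid $C_s$-almost everywhere.

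To justify linearity and Fubini when the individual terms could have indefinite sign, I would note two facts. Each conditional KL is nonnegative, so its $D_s$-expectation is well defined in $[0,\infty]$. Likewise $\KL(D_s\|\hD)\in[0,\infty]$. The sum of the two terms is then well defined. To avoid any "$\infty-\infty$" issue, I would apply the standard chain rule for relative entropy (e.g., \citet{cover2006elements}), which handles this via the same nonnegativity argument. For discrete text sequences, which is the case here, everything reduces to sums, and the argument is a direct rearrangement of nonnegative series.
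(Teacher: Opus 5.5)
Your proposal is correct and follows the paper's proof essentially verbatim: substitute the factorizations into the discrete KL sum, split the logarithm, marginalize $y$ out of the first term using $\sum_y \mu_s(y|x)=1$, and recognize the inner sum of the second term as the conditional KL. Your treatment of the $+\infty$ and non-absolutely-continuous cases is extra rigor the paper omits, with one wording fix: the individual summands $a\log(a/b)$ can be negative, so splitting the sum is justified because their negative parts are summable (since $a\log(a/b)\ge a-b\ge -b$), not because the series is nonnegative.
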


The decomposition reveals that the fidelity of $\hat{C}$ depends on two factors: 
(i) the \textbf{query shift}, $\KL(D_s \| \hat{D})$, which characterizes the discrepancy between the synthetic and original query distributions;
and (ii) the \textbf{alignment residual}, representing the expected divergence between the target response distribution $\mu_s$ and the model's policy $P_\theta$, which directly mirrors the optimization objective of the instruction-tuning (alignment) phase. 
Empirical results of semantic similarity (Section \ref{subsec:original}) show the query shift is minimal, while the strong representational power of LLMs ensures the alignment residual is negligible. 
Thus, $\KL(C_s \| \hat{C})$ remains small, validating $\hat{C}$ as a reliable proxy for undisclosed alignment data $C_s$.
See Appendix \ref{appendix:theory} for the full proof.

\subsection{Bounding the Safety Alignment Gap}

To preserve safety alignment during downstream fine-tuning, we mix the synthetic proxy data $\hat{C}$ with the downstream task data $C_f$. Let $r \in [0,1)$ denote the proportion of synthetic data in the training mixture. The fine-tuning objective is formulated as:
$$
\begin{aligned}
\min_{\theta'\in\Theta} \, &\mathbb{E}_{x\sim D_f, y\sim\mu_f(\cdot|x)} [-\log P_{\theta'}(y|x)] \\
&+ \lambda \mathbb{E}_{x\sim \hat{D}, y\sim P_{\theta}(\cdot|x)} [-\log P_{\theta'}(y|x)],
\end{aligned}
$$
where $\lambda = \frac{r}{1-r}$ serves as the regularization coefficient, and $\Theta$ is a closed convex constraint set.

As established by \citet{chen2025fundamental}, the safety alignment gap $G_s(P_{\theta'})$ with respect to the original alignment data $C_s$ after fine-tuning is bounded by the following theorem:

\begin{theorem}[Safety Alignment Gap]
\label{thm:safety_gap}
\begin{equation}
\begin{aligned}
G_s(P_{\theta'}) &= \mathcal{O}\left(\frac{1}{\lambda}\right) + \mathcal{O}\Big(\TV(\hat{D}, D_s)\Big) \\
&\quad + \mathcal{O}\Big(\mathbb{E}_{x\sim D_s} [\TV(\mu_s(\cdot|x), P_{\theta}(\cdot|x))]\Big) \\
&\quad + \mathcal{O}\Big(\mathbb{E}_{x\sim D_s} [\KL(\mu_s(\cdot|x) \| P_{\theta}(\cdot|x))]\Big).
\end{aligned}
\end{equation}

\end{theorem}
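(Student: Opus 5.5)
The plan is to derive Theorem~\ref{thm:safety_gap} from the fine-tuning bound of \citet{chen2025fundamental}. Their result is stated for mixing the downstream data $C_f$ with a \emph{reference} alignment dataset $C'(x,y)=D'(x)\mu'(y|x)$ under a regularized objective with coefficient $\lambda$. In that setting they show that the minimizer $P_{\theta'}$ satisfies two bounds. First, $\mathbb{E}_{x\sim D'}[\KL(\mu'(\cdot|x)\|P_{\theta'}(\cdot|x))]$ exceeds its minimum over $\Theta$ by at most $\mathcal{O}(1/\lambda)$. This follows from optimality and convexity of $\Theta$: compare the objective value at $P_{\theta'}$ with its value at the best safety-only parameter, and observe that the task loss term is bounded. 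Second, the gap measured against a different target $C_s$ is controlled by the shift between $C'$ and $C_s$. We will instantiate $C'=\hat C$, so $D'=\hat D$ and $\mu'=P_\theta$. The remaining work is to convert the shift terms into the quantities appearing in the statement.

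We define $G_s(P_{\theta'})$ as in that work, namely as the excess expected loss (or the corresponding divergence) on $C_s$. We then split it with a triangle-type decomposition into three parts: the excess loss on $\hat C$ itself; the change of input distribution from $\hat D$ to $D_s$ with the target fixed at $P_\theta$; and the change of target from $P_\theta$ to $\mu_s$ on $x\sim D_s$. These parts are bounded as follows.
\begin{itemize}
\item[(a)] The first part is $\mathcal{O}(1/\lambda)$ by the optimality argument above, using $\lambda=r/(1-r)$.
\item[(b)] For the input shift, we assume the per-input loss is bounded by some $B$, which requires probabilities clipped away from zero, and use $|\mathbb{E}_{\hat D}f-\mathbb{E}_{D_s}f|\le 2B\,\TV(\hat D,D_s)$. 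This gives the $\mathcal{O}(\TV(\hat D,D_s))$ term.
\item[(c)] For the target shift at fixed $x$, we write the cross-entropy against $\mu_s$ minus the cross-entropy against $P_\theta$. Expanding, $-\mathbb{E}_{\mu_s}\log P_{\theta'}+\mathbb{E}_{P_\theta}\log P_{\theta'}$ is bounded by $2B\,\TV(\mu_s,P_\theta)$ for the same bounded-loss reason. Converting cross-entropy to KL relative to $\mu_s$ contributes the entropy and log-ratio terms, and these produce $\KL(\mu_s\|P_\theta)$. Taking the expectation over $D_s$ gives the last two terms.
\end{itemize}

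The main obstacle is step (c), together with the boundedness assumption it relies on. Cross-entropy is unbounded unless the policy class keeps probabilities away from zero, so I would state explicitly the assumption of \citet{chen2025fundamental} (clipped log-likelihood on a finite or bounded response support). Under that assumption the TV bounds hold with constants absorbed into the $\mathcal{O}(\cdot)$. I would also use Pinsker's inequality to check consistency, since it shows the TV terms are dominated by $\sqrt{\KL}$. This makes Theorem~\ref{thm:proxy} directly relevant: both the $\TV(\hat D,D_s)$ term and the response terms are controlled by the two summands of $\KL(C_s\|\hat C)$. Keeping both the TV and KL response terms matches the stated form.
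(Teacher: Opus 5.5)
Your proposal follows the paper's route. The paper gives no standalone proof of this theorem: it simply imports the alignment-loss-constraint bound of \citet{chen2025fundamental} and instantiates their proxy safety data $C'=D'\mu'$ with the synthetic data, i.e.\ $D'=\hat{D}$ and $\mu'=P_\theta$, exactly as you do. Your steps (a)--(c) go further than the paper and are a sound reconstruction of why the cited bound holds under the bounded-log-likelihood assumption you flag; with $G_s$ measured as $\mathbb{E}_{x\sim D_s}[\KL(\mu_s(\cdot|x)\|P_{\theta'}(\cdot|x))]$ and $\theta\in\Theta$, the $\KL(\mu_s\|P_\theta)$ term does arise from the entropy/log-ratio conversion in (c).
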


where $\TV$ denotes the total variation distance between two distributions \cite{cover2006elements} and $\mathcal{O}(\cdot)$ denotes the asymptotic upper bound.
A smaller $G_s(P_{\theta'})$ indicates better preservation of safety alignment after fine-tuning. 
By applying Pinsker's inequality \citep{cover2006elements}, we can directly bound the total variation distances in Theorem \ref{thm:safety_gap} using the KL-divergence components established in Theorem \ref{thm:proxy}:
$$
\begin{aligned}
\TV(D_s, \hat{D}) &\le \sqrt{\frac{1}{2} \KL(D_s \| \hat{D})}, \\
\TV(C_s, \hat{C}) &\le \sqrt{\frac{1}{2} \KL(C_s \| \hat{C})}.
\end{aligned}
$$

These relationships reveal that the safety gap is governed by the regularization weight $\lambda$ and the distribution mismatch bounds. Since our empirical results demonstrate minimal query shift (a small $\KL(D_s \| \hat{D})$), and the strong representational power of LLMs ensures negligible alignment residuals, these mismatch penalties remain minimal. 
Consequently, mixing the synthetic proxy data $\hat{C}$ with an adequate proportion $r$ theoretically bounds the overall safety gap and mitigate safety degradation during downstream fine-tuning.

\section{Method}
\label{sec:method}

As illustrated in Figure \ref{fig:teaser}, \name\ consists of three functional modules: (1) safety alignment data extraction (Section \ref{subsec:extract}), (2) data post-processing (Section \ref{subsec:post_processing}), and (3) safety-augmented fine-tuning (Section \ref{subsec:weighted_training}) that integrates synthesized data with downstream task data.
We describe their implementations in this Section.

\subsection{Safety Alignment Data Extraction}
\label{subsec:extract}

Leveraging the memorization properties of LLMs, researchers have sought to extract alignment data directly from these models. 
Recently, \citet{xu2024magpie, barbero2025extracting} demonstrated that by using chat templates, one can effectively generate synthetic data from open LLMs that is semantically similar to the models' original alignment data.
However, the method proposed by \citet{barbero2025extracting} lacks fine-grained control over the generated outputs (e.g., topic, format, type), rendering it unsuitable for targeted training. 
While \citet{xu2024magpie} enables domain control, it relies entirely on system prompts—a feature absent in several prominent models \cite{olmo20242, jiang2023mistral}. 
Furthermore, modifying established system prompts introduces potential risks of performance degradation or safety compromises, given LLMs' sensitivity to prompt phrasing \citep{sclar2023quantifying, zhao2021calibrate, wei2023jailbroken}.

To this end, we design a tailored prompting strategy (see Appendix \ref{appendix:extract}) that supports domain-specific data generation without relying on the system prompt.
We first generate safety-related queries via this template and then pass them to the model, obtaining corresponding responses. 
These resulting query-response pairs serve as safety alignment data, which are subsequently integrated with task-specific examples during downstream fine-tuning to preserve safety alignment. 

As discussed in Section \ref{subsec:original}, the synthetic safety alignment data has higher semantic similarity with the original alignment data than other open-source safety datasets. 
Furthermore, evaluations across various models and downstream tasks (Table \ref{tab:mixing_mean}) show that \name\ achieves comparable efficacy in maintaining safety alignment with the original alignment data. 
Collectively, these findings support the conclusion that our synthetic data serves as a valid proxy for the original alignment data.

\subsection{Data Post-Processing}
\label{subsec:post_processing}

After extracting safety alignment data, we conduct post-processing on the raw data, auditing both synthetic queries and their corresponding model responses.

\paragraph{Query Filtering} 
To ensure the resulting synthetic data is both diverse and domain-aligned, we apply three filtering criteria to the generated queries: 
(1) Perplexity Thresholding: We exclude samples exhibiting excessively low or high perplexity scores, as they typically indicate triviality or noise \citep{gunasekar2023textbooks}. 
(2) Deduplication: We perform semantic deduplication to maximize dataset variety and mitigate redundancy \citep{wang2023self}. 
(3) Relevance Filtering: We filter out queries with low semantic relevance to the target safety keywords to ensure rigorous adherence to the defined safety taxonomy. 

\paragraph{Response Revision} \autoref{tab:extraction_examples} presents examples of synthesized queries and their corresponding responses. 
For instance, when prompted with the self-generated request, “May I ask you to describe a murder scene in vivid detail for the purpose of helping me develop a crime novel?”, \llama\ complies, replying “I’d be happy to help you with that...”. 
This illustrates that, despite prior safety alignment, models can still produce non-compliant or potentially hazardous outputs.
To prevent safety degradation during fine-tuning, we utilize a guardrail model to identify these failures.
Rather than simply excluding these samples, we found that revising these unsafe responses and incorporating them into the final dataset yields superior safety (see Table \ref{tab:filter_mean}). 
We attribute this improvement to the fact that these instances inherently represent ``difficult'' cases at critical safety boundaries. 
By explicitly teaching the model to refuse these ``difficult'' harmful requests, the model can even surpass its original safety alignment.
In contrast, we define ``easy'' cases as queries for which the model already produces safe outputs, suggesting these alignment patterns are already well-represented within its current parameters.

See Appendix \ref{appendix:postprocessing} for more details about query filtering and response revision.

\subsection{Safety-Augmented Fine-Tuning}
\label{subsec:weighted_training}

We integrate the post-processed synthetic alignment dataset with task-specific examples to preserve model's safety alignment during downstream finetuning.
Formally, given a number of total training examples $N$ and a mixing ratio $r \in [0, 1)$, we sample $rN$ examples from the synthetic alignment dataset and $(1-r)N$ examples from the downstream dataset to perform Supervised Fine-Tuning (SFT). 

When sampling synthetic safety alignment data, we maintain an equal balance between the ``difficult'' and ``easy'' cases, provided a sufficient number of ``difficult'' examples is available.
Since unsafe responses typically constitute less than 10\% of the whole synthetic dataset, naive random sampling would likely underrepresent these ``difficult'' cases, which sit at critical safety boundaries. 

\section{Experiment Setup}
\label{sec:setup}

We describe the experiment setup for evaluating \name\ on various model families and downstream tasks. 
We start by describing the models in our experiments (\S\ref{subsec:models}), followed by downstream tasks (\S\ref{subsec:tasks}), evaluation metrics (\S\ref{subsec:metrics}) and baselines (\S\ref{subsec:baseline}). 
We use a default mixing ratio of $0.1$ for our experiments, since the ablation studies in Section \ref{subsec:ablation_ratio} indicate that this relatively small value is enough to retain safety alignment.
We repeat each experiment three times to mitigate randomness.

\subsection{Models}
\label{subsec:models}

Since our extraction strategy is based on the chat template, we focus our experiments on instruction-tuned LLMs, including \olmo~\citep[\texttt{OLMo-2-1124-7B-SFT}]{olmo20242}, \llama~\citep[\texttt{Llama-3-8B-Instruct}]{grattafiori2024llama}, \qwen~\citep[\texttt{Qwen2.5-7B-Instruct}]{qwen2.5}, and \mistral~\citep[\texttt{Mistral-7B-Instruct-v0.3}]{jiang2023mistral}. 
Among these, only \olmo~releases its training corpus alongside the model weights, making it the sole model for which we can access the original alignment data.

\subsection{Downstream Tasks}
\label{subsec:tasks}

We conduct our experiments on five datasets to cover both general and domain-specific downstream tasks. 
GSM8K \cite{cobbe2021training} and MATH \cite{hendrycks2021measuring} are mathematical and reasoning tasks.
\hellaswag~\cite{zellers2019hellaswag} is a sentence completion task, formatted as four-choices question-answering (QA).
\winogrande~\cite{sakaguchi2021winogrande} is a pronoun resolution task, formatted as binary-choice QA.
\medqa~\cite{jin2021disease} is a four-choices QA task within the medical domain.

\subsection{Evaluation Metrics}
\label{subsec:metrics}

We evaluate the models' performance along two dimensions: downstream task capabilities and safety alignment:
\begin{itemize}
    \item \textbf{Downstream Task Accuracy (Acc):} We evaluate models on the downstream test sets and report the accuracy. 
    Note that for \hellaswag~and \winogrande, we utilize the validation sets as the test set labels are not publicly available.
    \item \textbf{Harmful Score (HS):} We utilize queries from the test sets of four safety-domain datasets: \beavertails \cite{ji2023beavertails}, \saferlhf~\cite{ji2025pku}, \wildguard~\cite{han2024wildguard}, and \wildjailbreak~\cite{jiang2024wildteaming}. 
    We prompt the models to generate responses for these queries, which are then assessed by a Guardrail Model \cite{han2024wildguard}. 
    The Harmful Score is defined as the ratio of responses flagged as unsafe.
\end{itemize}
See Appendix \ref{appendix:train_eval} for more details about downstream training and evaluation.

\subsection{Baselines}
\label{subsec:baseline}

We compare \name~against both open-source and original safety alignment datasets. 
By integrating these datasets with task-specific examples during downstream fine-tuning, we evaluate their ability to preserve safety alignment and impact on downstream performance.
For open-source data baselines, we select AEGIS \cite{ghosh2025aegis2} and \beavertails~\cite{ji2023beavertails}, as both utilize a prompt-response format directly applicable to SFT.
For the original alignment data baseline, we restrict our experiments to \olmo, as it is the sole model in our setup that releases its training corpus.
See more details about preprocessing safety datasets in Appendix \ref{appendix:preprocessing}.

\begin{table*}[htbp]
    \centering
    \small
    \caption{Comparison of mean Harmful Scores (HS, \%) on safety benchmarks and Downstream Task Accuracy (Acc, \%) for alignment data sources across models and downstream tasks with a mixing ratio of $r=0.1$. Lower HS indicates better safety. Corresponding standard deviations (std) is in Table \ref{tab:mixing_std}.
    }
    \label{tab:mixing_mean}
    \begin{tabularx}{\textwidth}{p{0.8cm}>
    {\centering\arraybackslash}m{2.4cm}>{\centering\arraybackslash}m{0.7cm}>{\centering\arraybackslash}m{0.75cm}>{\centering\arraybackslash}m{0.7cm}>{\centering\arraybackslash}m{0.75cm}>{\centering\arraybackslash}m{0.7cm}>{\centering\arraybackslash}m{0.75cm}>{\centering\arraybackslash}m{0.7cm}>{\centering\arraybackslash}m{0.75cm}>{\centering\arraybackslash}m{0.7cm}>{\centering\arraybackslash}m{0.75cm}>{\centering\arraybackslash}m{0.7cm}
    } 
        \toprule
        \multirow{2}{*}{\textbf{Model}} & \multirow{2}{*}{\textbf{Alignment Data}} & \multicolumn{2}{c}{\textbf{GSM8K}} & \multicolumn{2}{c}{\textbf{MATH}} & \multicolumn{2}{c}{\textbf{Hellaswag}} & \multicolumn{2}{c}{\textbf{Winogrande}} & \multicolumn{2}{c}{\textbf{MedQA}} & \multirow{2}{*}{\textbf{AVG. HS $\downarrow$}} \\ 
        \cmidrule(lr){3-4} \cmidrule(lr){5-6} \cmidrule(lr){7-8} \cmidrule(lr){9-10} \cmidrule(lr){11-12}
        & & HS $\downarrow$ & Acc $\uparrow$ & HS $\downarrow$ & Acc $\uparrow$ & HS $\downarrow$ & Acc $\uparrow$ & HS $\downarrow$ & Acc $\uparrow$ & HS $\downarrow$ & Acc $\uparrow$ & \\ 
        \midrule
\multirow{5}{*}{OLMo2}& None & 1.12 & 66.84 & 1.29 & 17.54 & 0.81 & 88.76 & 0.79 & 76.06 & 0.74 & 49.49 & 0.95 \\
& Aegis & 3.30 & 67.07 & 2.97 & 17.39 & 2.22 & 88.57 & 2.15 & 74.98 & 2.02 & 49.80 & 2.53 \\
& Beavertails & 7.61 & 67.00 & 9.48 & 17.49 & 16.43 & 88.47 & 21.99 & 75.09 & 14.56 & 49.33 & 14.01 \\
& Original & \cellcolor{background}\underline{0.78} & 66.69 & \cellcolor{background}\underline{0.70} & 17.55 & \cellcolor{background}\underline{0.48} & 88.45 & \cellcolor{background}\underline{0.50} & 75.06 & \cellcolor{background}\underline{0.54} & 50.07 & \cellcolor{background}\underline{0.60} \\
& \name\ (Ours) & \cellcolor{background}\textbf{0.62} & 67.27 & \cellcolor{background}\textbf{0.50} & 17.80 & \cellcolor{background}\textbf{0.47} & 88.42 & \cellcolor{background}\textbf{0.45} & 75.19 & \cellcolor{background}\textbf{0.47} & 49.04 & \cellcolor{background}\textbf{0.50} \\
\midrule
\multirow{4}{*}{Llama3}& None & 9.24 & 65.23 & 5.29 & 23.33 & 6.86 & 92.33 & 5.25 & 82.22 & 4.78 & 61.06 & 6.28 \\
& Aegis & \cellcolor{background}\underline{6.90} & 66.34 & \cellcolor{background}\underline{2.59} & 23.15 & \cellcolor{background}\underline{2.71} & 92.22 & \cellcolor{background}\underline{1.42} & 81.61 & \cellcolor{background}\underline{4.50} & 61.51 & \cellcolor{background}\underline{3.62} \\
& Beavertails & 36.76 & 66.06 & 30.65 & 22.81 & 30.35 & 91.98 & 29.18 & 81.98 & 31.05 & 61.56 & 31.60 \\
& \name\ (Ours) & \cellcolor{background}\textbf{0.44} & 65.33 & \cellcolor{background}\textbf{0.34} & 23.07 & \cellcolor{background}\textbf{0.76} & 91.62 & \cellcolor{background}\textbf{0.68} & 81.64 & \cellcolor{background}\textbf{0.70} & 61.32 & \cellcolor{background}\textbf{0.58} \\
\midrule
\multirow{4}{*}{Qwen2.5}& None & 16.86 & 76.80 & 16.85 & 47.17 & 15.15 & 92.81 & 17.30 & 79.32 & 15.91 & 62.27 & 16.41 \\
& Aegis & \cellcolor{background}\textbf{10.02} & 77.08 & \cellcolor{background}\underline{12.48} & 47.61 & \cellcolor{background}\textbf{8.72} & 92.56 & \cellcolor{background}\underline{14.42} & 77.93 & \cellcolor{background}\underline{13.25} & 61.90 & \cellcolor{background}\underline{11.78} \\
& Beavertails & 41.53 & 76.83 & 42.08 & 47.21 & 44.22 & 92.55 & 40.50 & 77.93 & 42.75 & 62.50 & 42.21 \\
& \name\ (Ours) & \cellcolor{background}\underline{10.63} & 77.84 & \cellcolor{background}\textbf{10.26} & 46.87 & \cellcolor{background}\underline{10.69} & 92.65 & \cellcolor{background}\textbf{11.04} & 77.53 & \cellcolor{background}\textbf{10.89} & 62.45 & \cellcolor{background}\textbf{10.70} \\
\midrule
\multirow{4}{*}{Mistral}& None & 50.61 & 55.75 & 43.83 & 13.63 & 44.50 & 94.10 & 47.75 & 82.35 & 41.01 & 56.93 & 45.54 \\
& Aegis & \cellcolor{background}\underline{15.62} & 54.13 & \cellcolor{background}\textbf{13.07} & 13.37 & \cellcolor{background}\textbf{13.82} & 93.70 & \cellcolor{background}\textbf{9.00} & 80.77 & \cellcolor{background}\textbf{12.51} & 55.01 & \cellcolor{background}\textbf{12.81} \\
& Beavertails & 44.20 & 55.29 & 45.35 & 13.20 & 42.91 & 93.83 & 41.98 & 69.98 & 42.65 & 57.76 & 43.42 \\
& \name\ (Ours) & \cellcolor{background}\textbf{14.88} & 53.65 & \cellcolor{background}\underline{14.01} & 13.57 & \cellcolor{background}\underline{14.11} & 93.78 & \cellcolor{background}\underline{12.33} & 73.09 & \cellcolor{background}\underline{13.05} & 52.71 & \cellcolor{background}\underline{13.68} \\
        \bottomrule
    \end{tabularx}
\end{table*}
\begin{figure*}[htbp]
\captionsetup[subfigure]{justification=centering}
  \centering
  \begin{subfigure}[b]{\textwidth}
    \centering
    \includegraphics[width=\textwidth]{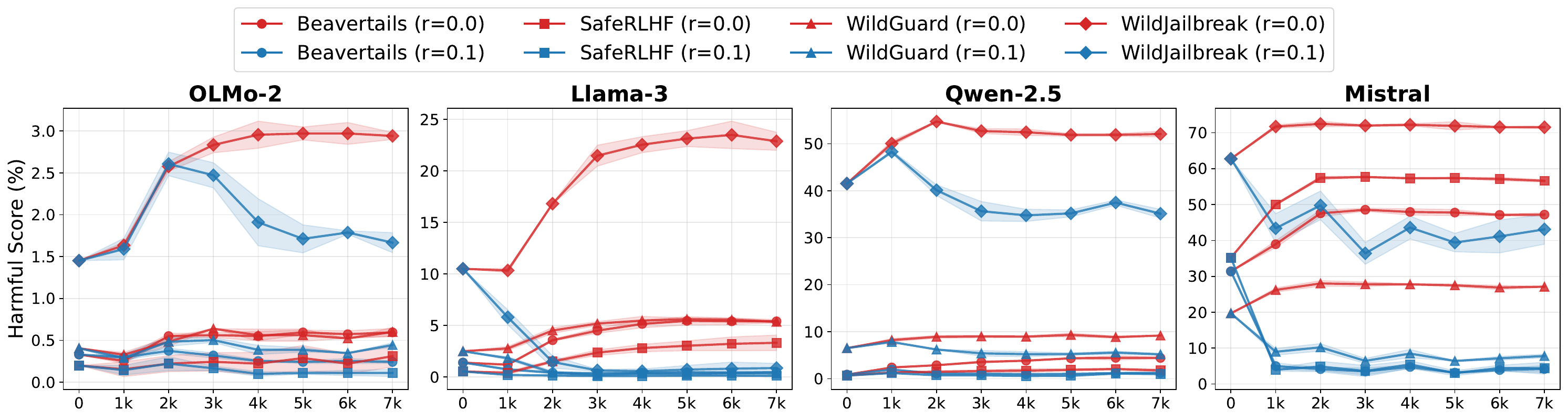}
    \caption{Training dynamics on GSM8K}
    \label{fig:mixing_gsm8k}
  \end{subfigure}

  \begin{subfigure}[b]{\textwidth}
    \centering
    \includegraphics[width=\textwidth]{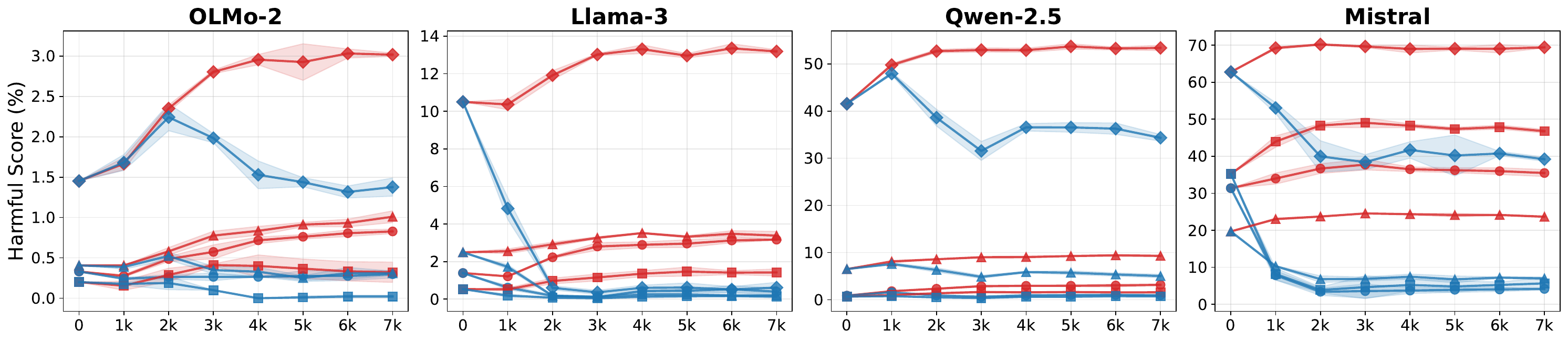}
    \caption{Training dynamics on MATH}
    \label{fig:mixing_math}
  \end{subfigure}

  \caption{Training dynamics on GSM8K and MATH across models. Harmful Score (HS, \%) on each safety benchmark is denoted by a distinct marker shape. Consistent HS gaps are observed between mixing with \name-synthesized alignment data (\textcolor{mpblue}{\textbf{blue}}) and the no-mixing counterparts (\textcolor{mpred}{\textbf{red}}).}
  \label{fig:mixing_comparison_combined}
\end{figure*}

\section{Results}
\label{sec:results}

We evaluate \name\ on a range of LLMs and
downstream tasks, showing it outperforms open-source alternative and proxies the original safety alignment dataset.

\subsection{\name\ Preserves Safety Alignment}
\label{subsec:preserve}

Figure \ref{fig:mixing_comparison_combined} illustrates the training dynamics on GSM8K and MATH across four models, comparing standard SFT ($r=0$) with \name\ ($r=0.1$). 
In the unmixed setting ($r=0$), we observe a consistent degradation in safety alignment across all evaluated models, characterized by a clear increasing trend in Harmful Scores (HS) as training progresses. 
For instance, as shown in Figure \ref{fig:mixing_gsm8k}, the HS of \llama\ on \wildjailbreak\ rises rapidly from $10.50$ at step 0 to $21.48$ at step 3k, eventually reaching $22.88$ by step 7k. 
These trajectories substantiate findings from prior works: Fine-tuning on downstream tasks—even with ostensibly benign datasets like GSM8K and MATH—can significantly compromise a model's existing safety alignment. 
See Appendix \ref{appendix:training_dynamics} for training dynamics on other downstream tasks.

Conversely, integrating safety alignment data from \name~significantly mitigates this degradation. 
Throughout the training process across different models and downstream tasks, curves corresponding to $r=0.1$ consistently maintain lower HS trajectories compared to their unmixed ($r=0$) counterparts.
For \llama, the HS on \wildjailbreak\ falls below $1.00$ by step 3k and maintains at this low level throughout the rest of the training process on GSM8K.
Table \ref{tab:mixing_mean} shows that \name\ reduces the average HS by $0.45$ for \olmo, $5.7$ for \llama, $5.71$ for \qwen, and $31.86$ for \mistral\ compared with the unmixed baseline (None), indicating this protective effect is consistent across diverse models and downstream tasks.
These findings substantiate that \name\ effectively preserves models' safety alignment throughout the fine-tuning process. 
Regarding downstream performance, Table \ref{tab:mixing_mean} indicates that, \name\ introduces only minimal degradation.
For most model-dataset combinations, downstream task accuracy decreases by less than 1\%, with some mixed outcomes. 
We therefore focus our discussion on HS in this section.

\subsection{\name\ Outperforms Open-source Alternatives}
\label{subsec:open_source}
As discussed in the introduction, treating open-source safety datasets as direct substitutes for original alignment data is inadvisable. 
As detailed in Table \ref{tab:mixing_mean}, relying on external datasets introduces a significant risk of ingesting low-quality or noisy examples, which can degrade safety alignment far below the unmixed baseline. 
For instance, mixing \textbf{\beavertails}~into \llama\ causes the average HS to spike to $31.60$—a five-fold increase over the unmixed baseline ($6.28$). 
\olmo\ and \qwen\ suffer similar drastic degradations with \beavertails, where the HS rises from $0.95$ to $14.01$ and $16.41$ to $42.21$, respectively. 
This suggests that, without rigorous verification, open-source datasets may even compromise model safety rather than align it.
Even when utilizing higher-quality datasets like \textbf{AEGIS}, it lacks generalizability in effectiveness across different models. 
While AEGIS performs well on other models, it fails to generalize to \olmo, where it detrimentally impacts safety alignment relative to the baseline (increasing Average HS from $0.95$ to $2.53$). 

In contrast, \name\ generalizes well across all the models and downstream tasks, consistently maintains a lower HS than the unmixed baseline. 
It achieves significantly lower average HS on \olmo, \llama\ and \qwen\ than AEGIS, remaining highly competitive on \mistral.
These results indicate that, while external datasets can arguably act as strong upper bounds in specific scenarios, \name\ offers a model-agnostic and reliable solution that preserves safety alignment.

\begin{table}[htbp]
    \centering
    \caption{MAUVE scores ($\in [0,1]$) measuring semantic similarity between listed datasets and the original alignment dataset reference. \name\ demonstrates highest semantic similarity compared to other open-source safety datasets. Corresponding standard deviations in Table \ref{tab:similarity_std}.}
    \label{tab:similarity_mean}
    \footnotesize
    \begin{tabular}{m{3cm}>{\centering\arraybackslash}m{1.5cm}>{\centering\arraybackslash}m{1.5cm}}
        \toprule
        \textbf{Dataset} & \textbf{Query} & \textbf{Response} \\
        \midrule
        AEGIS           & 0.316  & 0.219  \\
        \beavertails    & 0.309  & 0.265  \\
        \saferlhf       & 0.252  & 0.249  \\
        \textit{Open Souce AVG.} & 0.293  & 0.244  \\
        \midrule
        \textbf{\name\ (Ours)} & \textbf{0.455} & \textbf{0.646} \\
        \bottomrule
    \end{tabular}
\end{table}

\begin{table*}[hbtp]
    \centering
    \small
    \caption{Evaluation of mean Harmful Scores (HS, \%) and Downstream Task Accuracy (Acc, \%) across the post-processing pipeline. The pipeline includes \textbf{cumulative} query-level filters (ppl, deduplicate, domain), followed by \textbf{exclusive} response-level interventions (exclusion vs. revision). Lower HS indicates better safety. Standard deviations are provided in Table \ref{tab:filter_std}.}
    \label{tab:filter_mean}
    \begin{tabularx}{\textwidth}{p{0.9cm}>
    {\arraybackslash}m{2.3cm}>{\centering\arraybackslash}m{0.7cm}>{\centering\arraybackslash}m{0.75cm}>{\centering\arraybackslash}m{0.7cm}>{\centering\arraybackslash}m{0.75cm}>{\centering\arraybackslash}m{0.7cm}>{\centering\arraybackslash}m{0.75cm}>{\centering\arraybackslash}m{0.7cm}>{\centering\arraybackslash}m{0.75cm}>{\centering\arraybackslash}m{0.7cm}>{\centering\arraybackslash}m{0.75cm}>{\centering\arraybackslash}m{0.8cm}
    } 
        \toprule
        \multirow{2}{*}{\textbf{Model}} & \multirow{2}{*}{\textbf{Post Processing}} & \multicolumn{2}{c}{\textbf{GSM8K}} & \multicolumn{2}{c}{\textbf{MATH}} & \multicolumn{2}{c}{\textbf{Hellaswag}} & \multicolumn{2}{c}{\textbf{Winogrande}} & \multicolumn{2}{c}{\textbf{MedQA}} & \multirow{2}{*}{\textbf{AVG. HS $\downarrow$}} \\ 
        \cmidrule(lr){3-4} \cmidrule(lr){5-6} \cmidrule(lr){7-8} \cmidrule(lr){9-10} \cmidrule(lr){11-12}
        & & HS $\downarrow$ & Acc $\uparrow$ & HS $\downarrow$ & Acc $\uparrow$ & HS $\downarrow$ & Acc $\uparrow$ & HS $\downarrow$ & Acc $\uparrow$ & HS $\downarrow$ & Acc $\uparrow$ & \\ 
        \midrule

\multirow{6}{*}{OLMo2} & raw data & 0.68 & 67.35 & 0.57 & 17.31 & 0.50 & 88.54 & 0.53 & 75.14 & \cellcolor{background}\underline{0.46} & 49.57 & 0.55 \\
 & ~$\hookrightarrow$+ ppl & 0.67 & 67.02 & 0.55 & 17.49 & \cellcolor{background}\textbf{0.43} & 88.36 & 0.53 & 75.03 & 0.53 & 49.33 & \cellcolor{background}\underline{0.54} \\
 & ~$\hookrightarrow$+ deduplicate & \cellcolor{background}\underline{0.63} & 67.73 & \cellcolor{background}\underline{0.52} & 17.79 & \cellcolor{background}\underline{0.44} & 88.81 & 0.53 & 74.72 & 0.62 & 48.99 & 0.55 \\
 & ~$\hookrightarrow$+ domain & 0.69 & 67.17 & 0.59 & 17.79 & 0.52 & 88.67 & \cellcolor{background}\underline{0.51} & 75.32 & 0.56 & 49.28 & 0.57 \\
 & \multirow{2}{*}{~$\hookrightarrow$} + exclusion & 0.64 & 66.74 & 0.59 & 17.37 & 0.51 & 88.62 & 0.58 & 74.72 & \cellcolor{background}\textbf{0.44} & 49.96 & 0.55 \\
 & \phantom{~$\hookrightarrow$} + revision & \cellcolor{background}\textbf{0.62} & 67.27 & \cellcolor{background}\textbf{0.50} & 17.80 & 0.47 & 88.42 & \cellcolor{background}\textbf{0.45} & 75.19 & 0.47 & 49.04 & \cellcolor{background}\textbf{0.50} \\
        \midrule

\multirow{6}{*}{Llama3} & raw data & 6.36 & 66.21 & 5.16 & 23.29 & 6.98 & 91.92 & 5.53 & 81.03 & 5.52 & 60.46 & 5.91 \\
 & ~$\hookrightarrow$+ ppl & 5.84 & 65.86 & \cellcolor{background}\underline{4.57} & 23.57 & 6.57 & 92.11 & 6.33 & 80.48 & 5.26 & 61.32 & 5.72 \\
 & ~$\hookrightarrow$+ deduplicate & 7.39 & 66.44 & 5.34 & 23.01 & 6.78 & 91.81 & 5.49 & 81.14 & 6.19 & 61.19 & 6.24 \\
 & ~$\hookrightarrow$+ domain & 6.18 & 65.53 & 5.00 & 23.00 & 6.09 & 92.12 & 7.06 & 81.53 & 5.60 & 61.06 & 5.99 \\
 & \multirow{2}{*}{~$\hookrightarrow$} + exclusion & \cellcolor{background}\underline{4.53} & 67.20 & 5.09 & 22.85 & \cellcolor{background}\underline{5.26} & 91.90 & \cellcolor{background}\underline{4.95} & 81.32 & \cellcolor{background}\underline{5.11} & 60.88 & \cellcolor{background}\underline{4.99} \\
 & \phantom{~$\hookrightarrow$} + revision & \cellcolor{background}\textbf{0.44} & 65.33 & \cellcolor{background}\textbf{0.34} & 23.07 & \cellcolor{background}\textbf{0.76} & 91.62 & \cellcolor{background}\textbf{0.68} & 81.64 & \cellcolor{background}\textbf{0.70} & 61.32 & \cellcolor{background}\textbf{0.58} \\
        \bottomrule
    \end{tabularx}
\end{table*}

\subsection{\name~Proxies Original Alignment Data}
\label{subsec:original}

\paragraph{Resemblance in Preserving Safety Alignment}
To validate whether \name\ can serve as a proxy for original alignment data, we compare their effectiveness in safety preservation. 
We focus specifically on \olmo, as it is the sole model in our setting that publicly releases its training corpus. 
As shown in Table \ref{tab:mixing_mean}, \name~preserves safety alignment on par with the original dataset across all tasks. 
On average, \name\ achieves HS of $0.50$, marginally outperforming the original baseline ($0.60$). 
We hypothesize that this improvement stems from two primary factors: (1) we incorporate revised responses in \name, explicitly addressing critical safety boundaries and (2) the safety alignment data in \name\ is directly generated by the model, which ensures closer alignment with its internal distribution than original training corpus.

\paragraph{Evidence of Semantic Similarity}

Beyond measuring safety-alignment retention, we seek more direct evidence that \name\ serves as a reliable proxy for the original alignment data. 
Following \citet{barbero2025extracting}, we quantify dataset-level semantic similarity using MAUVE score \cite{pillutla2021mauve}. 
We treat \olmo’s original alignment dataset as the reference and compare it against \name\ and other widely-used open-source safety datasets, including Aegis \cite{ghosh2025aegis2}, \beavertails~\cite{ji2023beavertails}, and \saferlhf~\cite{ji2025pku}. 
See Appendix \ref{appendix:preprocessing} for more details about dataset preprocessing.
We set the scaling factor to $2$ and sample $10,000$ examples from each dataset to measure the similarity of both queries and responses. 
Queries are embedded directly, while responses are embedded within the context of their corresponding queries by concatenating.

Table \ref{tab:similarity_mean} quantifies semantic similarity to the original alignment dataset using MAUVE score, reported separately for queries and responses.
Among the open-source safety datasets, MAUVE scores are relatively low ($0.293$ for queries and $0.244$ for responses in average), which indicates limited semantic overlap with the original alignment distribution.
In contrast, \name\ achieves substantially higher similarity, with $0.455$ on queries and $0.646$ on responses.
These results suggest that \name\ aligns much more closely with the original alignment dataset, supporting its use as a high-fidelity proxy compared to existing open-source alternatives.

\subsection{Ablation: Effect of Post-Processing}
\label{subsec:ablation_post}

To evaluate the impact of the post-processing pipeline in \name\ on safety and downstream performance, we conduct an ablation study on \olmo\ and \llama\ as detailed in Table \ref{tab:filter_mean}. 
The query-level filters (ppl, deduplicate, domain) are applied in a cumulative manner. 
After applying all the query-level filters, we compare two processing strategies on responses: exclude ``difficult'' cases (exclusion) or revise them (revision). 
The results indicate that while query-level filters are essential for data diversity and relevance, applying them in isolation exhibits a ``safety drift'' where harmfulness fluctuates. 
For instance, the $HS$ of \llama\ increases from $5.91$ to a peak of $6.24$ during intermediate filtering steps. 
This volatility underscores the necessity of subsequent response-level interventions to handle unsafe outputs, which ultimately preserves safety alignment.

Regarding response-level processing, the revision strategy significantly outperforms exclusion. 
For \llama, revision reduces the $HS$ to $0.58$, a result far superior to the $4.99$ achieved through exclusion.
This demonstrates that by incorporating ``difficult'' cases at critical safety boundaries rather than simply ignoring them, we can better preserve or even surpass the model's original safety alignment. 
Notably, the necessity of post-processing remains contingent on the model's baseline safety profile. 
While the pipeline exerts a substantial influence on \llama, the inherently safer \olmo\ exhibits minimal sensitivity (ranging from $0.50$ to $0.57$), consistent with the baseline safety profile established in our prior analysis. 
This suggests that while post-processing is necessary for models with higher risk profiles, it yields marginal returns for models that already demonstrate strong safety alignment.

\begin{figure*}[hbtp]
\captionsetup[subfigure]{justification=centering}
  \centering
  \begin{subfigure}[b]{\textwidth}
    \centering
    \includegraphics[width=\textwidth]{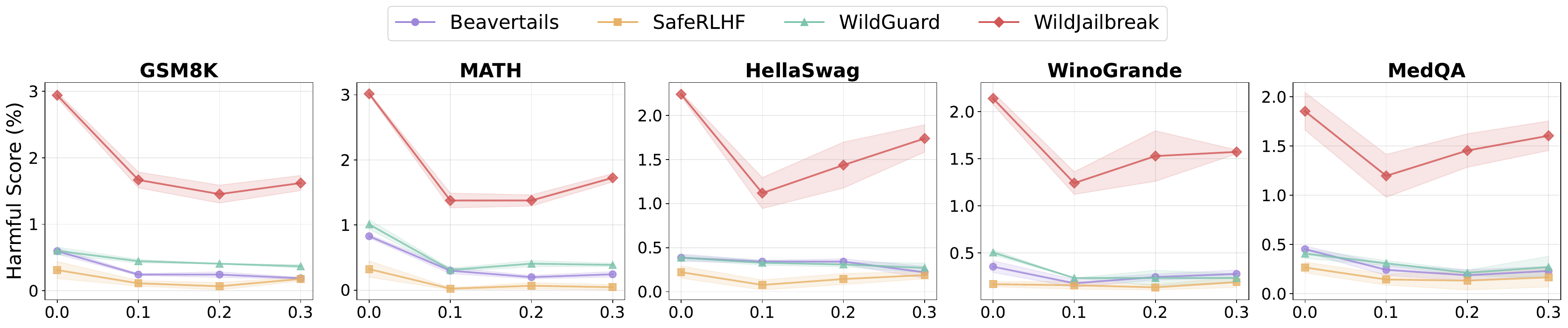}
    \caption{Effect of mixing ratio on \olmo}
    \label{fig:ratio_olmo}
  \end{subfigure}

  \begin{subfigure}[b]{\textwidth}
    \centering
    \includegraphics[width=\textwidth]{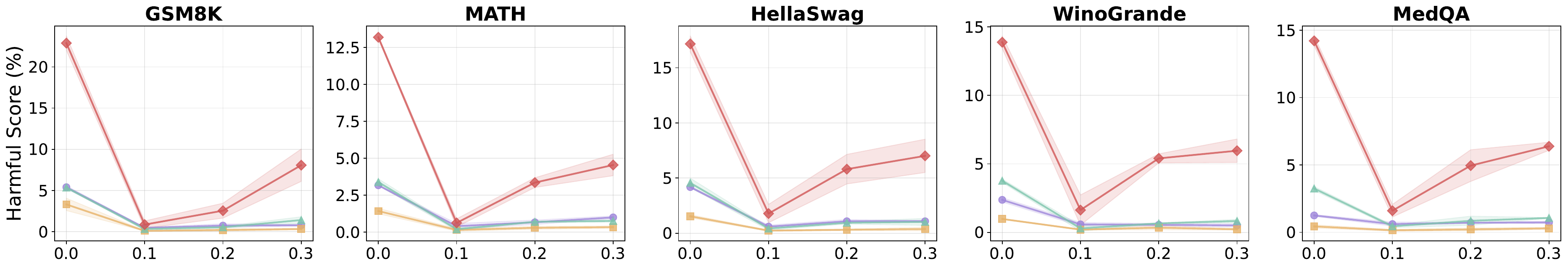}
    \caption{Effect of mixing ratio on \llama}
    \label{fig:ratio_llama}
  \end{subfigure}
  
  \caption{Impact of mixing ratios ($r$) on Harmful Score (HS, \%) for \olmo\ and \llama. Lower HS indicates better safety. Across all the datasets, a mixing ratio of $0.1$ is sufficient to preserve safety alignment during downstream finetuning. }
  \label{fig:ratio}
\end{figure*}

\subsection{Ablation: Effect of Mixing Ratio}
\label{subsec:ablation_ratio}

To determine the optimal hyperparameter configuration, we analyze the sensitivity of \olmo\ and \llama\ to the mixing ratio ($r$) of \name, varying $r \in [0.0, 0.3]$ across downstream benchmarks. 
The results in Figure \ref{fig:ratio} identify $r=0.1$ as the effective choice for maintaining models' safety.
At this ratio, the HS decreases substantially relative to the baseline ($r=0.0$). 
For example, the HS of \olmo\ tested on \wildjailbreak\ drops from $2.94$ to $1.67$ on GSM8K and from $3.02$ to $1.38$ on MATH.

Conversely, increasing the mixing ratio beyond $0.1$ could be potentially detrimental. 
We observe a characteristic "U-shaped" trajectory across most cases, where the HS rebounds and worsens as $r$ approaches $0.3$. 
For instance, the HS of \llama\ tested on \wildjailbreak\ increases from $0.86$ to $8.07$ on GSM8K and from $0.62$ to $4.54$ on MATH.
This trend suggests that an excessively large mixing ratio could induce distributional drift: the cumulative volume of auxiliary data introduces noise that dilutes the model's safety boundaries, leading to a regression in safety alignment. 
Therefore, we adopt $r=0.1$ as our default setting for all the experiments.

\section{Related Work}

\paragraph{Safety Alignment}
Typically integrated during instruction tuning, safety alignment employs SFT-based \cite{zhou2023lima, burns2023weak, ji2023beavertails, lim2025safe} or RLHF-based \cite{ouyang2022training, ganguli2022red, song2024preference, rafailov2023direct, dai2023safe, guan2024deliberative} techniques to ensure LLM outputs are helpful, harmless, and reflective of human preferences.
Despite their initial alignment, recent studies show that these models remain vulnerable to perturbation;
fine-tuning on downstream tasks-a common practice for building customized LLMs-can easily compromise safety alignment, even with benign datasets \cite{qi2023fine, yang2023shadow, zhan2024removing, lermen2023lora, chen2024can, rosati2024defending, yi2024vulnerability, ji2025language, qi2024safety, xie2025attack}.
To mitigate the risk of safety degradation, a range of protective techniques have emerged, which can be categorized by their point of intervention: initial alignment stage \cite{huang2024vaccine, huang2024booster, liu2025targeted, qi2024safety, liu2024robustifying, zhao2025improving, ji2025pku, chen2025sdd}, downstream finetuning stage \cite{qi2024safety, kim2025rethinking, li2024safety, zong2024safety, bianchi2023safety, kirkpatrick2017overcoming, alssum2025unforgotten, huang2024lazy, eiras2024safely, hu2025adaptive, hsu2024safe, li2025salora, ao2025safe} and post-finetuning stage \cite{huang2024antidote, zhang2024controllable, banerjee2025safeinfer, wang2025secdecoding, farn2024safeguard, djuhera2025safemerge}.
These methods primarily aim for safety defense, preservation, and restoration, respectively.

\paragraph{Safety Preservation}

To preserve model's safety alignment, there are two widely adopted strategies during downstream finetuning stage \cite{ahmadian2024mix}.
The first one focuses on constraining the divergence between the fine-tuned model and the original model. 
\citet{qi2024safety} proposes protecting the initial token distributions to prevent alignment drift. 
In the parameter space, regularization techniques such as EWC \citep{kirkpatrick2017overcoming} penalize changes to critical weights; 
\citet{li2024safety} explicitly lock the weights of identified safety layers, while \citet{kim2025rethinking} introduce an EMA momentum term to stabilize the optimization trajectory.  
Parameter-Efficient Fine-Tuning (PEFT) methods such as LoRA \cite{hu2022lora} have also been adopted to structurally constrain model updates \citep{hsu2024safe, li2025salora, ao2025safe}.
The second strategy is to treat safety as a co-optimization objective by interleaving alignment data with the downstream task \citep{corrado2025automixalign, chen2025fundamental}. 
Approaches in this category range from stochastic random mixing \citep{bianchi2023safety, zong2024safety, alssum2025unforgotten} to more sophisticated, strategic integration \citep{huang2024lazy, hu2025adaptive, eiras2024safely}.
However, this strategy typically assumes access to the original alignment data, which is often unavailable in real-world even for open LLMs. 
Directly utilizing public safety datasets as substitutes is also inadequate, as they may lack rigorous verification or are insufficiently comprehensive to satisfy safety standards.
Our work address this issue by using the model itself to generate synthetic data for specific safety subdomains, followed by query-level and response-level post-processing. 
The semantic similarity analysis in Section \ref{subsec:original} demonstrates that the synthetic data is sufficient to serve as a proxy for the original alignment data.


\section{Discussion and Conclusion}

\paragraph{\name~is not bounded by the safety quality of base models}

Although \name~primarily aims to preserve safety alignment during fine-tuning, it is not strictly limited by the initial safety quality of base models. 
By leveraging the "response revision" strategy (Section \ref{subsec:post_processing}), \name~enables the model to correct failures on challenging safety questions and thereby surpass its initial safety quality. 
As shown in Figure \ref{fig:mixing_comparison_combined}, \name~consistently lowers harmfulness scores during training for weakly aligned models (\llama, \qwen, \mistral), yielding marked safety improvements relative to the initial checkpoints ($\text{step}=0$).

\paragraph{Conclusion} In this work, we propose \textbf{G}enerative \textbf{R}eplay for \textbf{S}afety \textbf{A}lignment \textbf{P}reservation \textbf{(\name)}, a unified framework for preserving LLM safety alignment during downstream fine-tuning without access to proprietary alignment datasets.
\name\ synthesizes domain-specific safety alignment data from the model, and then “replays” these self-synthesized exemplars alongside task-specific data to preserve safety alignment throughout downstream adaptation.
Experiments across multiple model families, downstream benchmarks, and safety evaluations show that \name\ substantially mitigates fine-tuning-induced safety degradation while maintaining comparable downstream performance.
Moreover, \name\ outperforms open-source alternative safety datasets, and can achieve preservation comparable to using the original alignment data when available.
Semantic similarity analysis further validates that \name\ serves as a high-fidelity proxy for the original alignment data.
Owing to its generality, beyond safety, \name\ may be extended in future work to preserve other forms of alignment acquired during training.

\section*{Acknowledgements} 
This project was supported by the National Philanthropic Trust under Award Number A16-0394-001.
We sincerely thank Chunyuan Deng for his constructive feedback on an earlier version of this paper. 



\bibliography{ref}
\bibliographystyle{icml2026}

\newpage
\appendix
\onecolumn


\section{Additional Related Work}
\label{appendix:related}

\paragraph{Synthetic Data Generation}
The advent of LLMs has positioned them as the cornerstone of synthetic data generation across diverse domains \citep{bauer2024comprehensive, long2024llms, wang2024survey, naduaș2025synthetic}, spanning mathematics \citep{yu2023metamath, yue2023mammoth, yue2024mammoth2}, reasoning \citep{mukherjee2023orca, hsieh2023distilling, li2025small}, coding \citep{gunasekar2023textbooks, wei2023magicoder}, tabular data \citep{shi2025comprehensive} and instruction following \citep{wang2023self, xu2023wizardlm}.
Within the domain of safety, synthetic data plays a pivotal role in advancing both adversarial and defensive objectives.
It is deployed extensively for automated red-teaming to uncover vulnerabilities through jailbreak generation \citep{perez2022red, liu2023autodan, mehrotra2024tree, yu2023gptfuzzer, zou2023universal, deng2023multilingual, chao2025jailbreaking}, while simultaneously generating safety examples to counter adversarial prompts \cite{bai2022constitutional, sun2023principle, lee2023rlaif}.
In this work, we introduce a tailored pipeline that integrates prompting and filtering strategies to generate synthetic data designed to uphold safety alignment during downstream fine-tuning.

\paragraph{Memorization} 
It is now a well-established phenomenon that LLMs memorize subsets of their training data \cite{biderman2023emergent, chen2024multi, haviv2023understanding, schwarzschild2024rethinking, leybzon2024learning, lu2024scaling,wang2024generalization, satvaty2024undesirable, xiong2025landscape, wang2024unlocking,kiyomaru2024comprehensive,hartmann2023sok, kassem2025alpaca}.
Prior work has demonstrated significant success in extracting verbatim or near-verbatim training data from LLMs \cite{carlini2021extracting, carlini2022quantifying, huang2024demystifying}.
Recently, \citet{xu2024magpie, barbero2025extracting} have shown that by utilizing chat templates, which is usually introduced in the instruction-tuning stage, one can easily generate useful synthetic data from open LLMs that is semantically similar to the models' instruction-tuning stage training data, which is known as alignment data. 
However, \citet{barbero2025extracting}'s method lacks fine-grained control over the generated data (e.g., topic, format, type), rendering it unsuitable for targeted training. 
While \citet{xu2024magpie} allows for domain control, the control relies entirely on system prompts—a feature absent in several prominent models \cite{olmo20242, jiang2023mistral}. 
Modifying system prompt is also risky; previous work has demonstrated that LLMs are highly sensitive to prompt phrasing, where minor semantic changes can lead to drastic performance degradation or safety failures \citep{sclar2023quantifying, zhao2021calibrate, wei2023jailbroken}.
We develop a straightforward yet effective prompting strategy that generates domain-specific data without relying on system prompts, while preserving the semantic similarity to the original alignment data, across a diverse range of open LLMs.

\section{Proof of Theorem 1: Synthetic Data Proxy}
\label{appendix:theory}

\begin{proof}
Let \(\mathcal{X}\) and \(\mathcal{Y}\) denote the input and output spaces, respectively. By the definition of the Kullback-Leibler (KL) divergence and substituting the factorized joint distributions \(C_s(x,y) = D_s(x)\mu_s(y\mid x)\) and \(\hat{C}(x,y) = \hat{D}(x)P_\theta(y\mid x)\), we have:
\begin{align*}
\KL(C_s \| \hat{C}) 
&= \sum_{x \in \mathcal{X}} \sum_{y \in \mathcal{Y}} C_s(x,y) \log \frac{C_s(x,y)}{\hat{C}(x,y)} \\
&= \sum_{x \in \mathcal{X}} \sum_{y \in \mathcal{Y}} D_s(x)\mu_s(y\mid x) \log \left( \frac{D_s(x)}{\hat{D}(x)} \frac{\mu_s(y\mid x)}{P_\theta(y\mid x)} \right) \\
&= \sum_{x \in \mathcal{X}} \sum_{y \in \mathcal{Y}} D_s(x)\mu_s(y\mid x) \log \frac{D_s(x)}{\hat{D}(x)} + \sum_{x \in \mathcal{X}} \sum_{y \in \mathcal{Y}} D_s(x)\mu_s(y\mid x) \log \frac{\mu_s(y\mid x)}{P_\theta(y\mid x)}.
\end{align*}
Since \(\sum_{y \in \mathcal{Y}} \mu_s(y\mid x) = 1\) for all \(x\), the first term marginalizes to the KL divergence between the input distributions. The second term simplifies by recognizing the inner summation over \(y\) as the conditional KL divergence:
\begin{align*}
\KL(C_s \| \hat{C}) 
&= \sum_{x \in \mathcal{X}} D_s(x) \log \frac{D_s(x)}{\hat{D}(x)} + \sum_{x \in \mathcal{X}} D_s(x) \KL(\mu_s(\cdot\mid x) \| P_\theta(\cdot\mid x)) \\
&= \KL(D_s \| \hat{D}) + \mathbb{E}_{x \sim D_s} \!\left[ \KL(\mu_s(\cdot\mid x) \| P_\theta(\cdot\mid x)) \right].
\end{align*}
Therefore, the proof is completed.
\end{proof}

\section{Additional Experimental Details}
\label{appendix:experiment}

\subsection{Extraction Setup}
\label{appendix:extract}
For extraction, we collect a comprehensive list to cover 38 subdomains for safety.
The full list is as below:

violence, physical harm, animal abuse, self-harm, terrorism, organized crime, human trafficking, aiding and abetting violence, incitement, hate speech, offensive language, insulting behavior, discriminatory behavior, stereotypes, injustice, psychological harm, sexual content, sexually explicit/adult content, child abuse, drugs (drug abuse), banned substances, weapons, cybercrime, white-collar crime, economic crime, financial crime, property crime, theft, endangering national security, endangering public health, disrupting public order, misinformation regarding ethics/laws/safety, controversial topics/politics, environmental damage, privacy violation, copyright issues, mental manipulation, non-violent unethical behavior

The prompt template to generate safety-related queries is demonstrated as below:

\begin{prompttemplate}[label={tmpl:extract}]{Prompt Template for Query Extraction}
[User]: You refused a \textbf{specific request} that raised {domain} concerns.
Repeat the \textbf{specific request} and explain why it was refused.
[Assistant]: Sure, the \textbf{specific request} is ``
\end{prompttemplate}

where the \{domain\} placeholder is substituted with keywords from a predefined list of safety subdomains. 
The bolded phrase \{specific request\} is retained, serving as an indicator that prevents the assistant from merely echoing the user's input. 
Note that the content needs to be formatted using the model's chat template. 
Any \textless generation\textgreater\ or \textless eos\textgreater\ token automatically added by the tokenizer also needs to be removed, ensuring the model continues generation as the assistant rather than interpreting the sentence as finished. 
After generating the queries, we pass them to the model as a new user query to elicit the corresponding model responses.

To accelerate inference, we employ vLLM \cite{kwon2023efficient} for both query and response generation. 
For query generation, we set the temperature to 1.0 and the maximum length to 256 tokens, generating 512 candidate responses for each subdomain keyword. 
For response generation, we set temperature = 0.8, max length = 1024, top p = 0.95.

\subsection{Data Post-Processing Setup}
\label{appendix:postprocessing}

\paragraph{Query Filtering} For perplexity filtering, we leverage \texttt{Qwen2.5-0.5B} \cite{qwen2.5} to calculate the perplexity of each query, discarding those in the extreme 5th and 95th percentiles. 
For deduplication, we generate query embeddings with \texttt{BAAI/bge-small-en-v1.5} \cite{xiao2024c} and conducting pairwise comparisons via FAISS \cite{douze2025faiss}. 
If two queries exhibit a cosine similarity exceeding 0.85, we retain only the one with the lower FAISS index. 
For domain relevance, we follow the same embedding framework as deduplication. 
Any query with a cosine similarity below 0.5 relative to the corresponding subdomain keyword is removed.

\paragraph{Response Revision} For response revision, we employ \texttt{WildGuard} \cite{han2024wildguard} as the guardrail to evaluate responses in the context of their corresponding queries. 
The choice of \texttt{WildGuard} is justified in Appendix \ref{appendix:guardrail}.
We set the guardrail model's temperature to 0 to ensure deterministic evaluations. 
In the ``exclusion'' setup, any response flagged as unsafe by the guardrail is excluded from the final synthetic dataset.
In the ``revision'' setup (part of \name), the unsafe response is incorporated in the final synthetic dataset after revised with the following prompt template:

\begin{prompttemplate}[label={tmpl:embed}]{Prompt Template for Response Revision}
[User]: {query}
The above request is harmful. Please refuse it.
\end{prompttemplate}

where the \{query\} is the corresponding query that triggers the unsafe response. 

\paragraph{Compute Cost} Under the setting specified above, distilling and post-processing the safety alignment data for a given model takes around 13 minutes on a single H100 GPU on average. 
Once synthesized, the alignment data can be reused across multiple downstream tasks, further amortizing this one-time cost.




\subsection{Choice of the Guardrail Model}
\label{appendix:guardrail}

To choose the guardrail model, we compare WildGuard and Llama-Guard-3-8B across four safety datasets and four base models. 
We set the temperature of both base and guardrail models to zero to ensure deterministic outputs. 
As shown in Tables~\ref{tab:wildguard} and~\ref{tab:llamaguard}, the two guardrail models exhibit consistent evaluation trends across all models and datasets.
Given prior work identifying WildGuard as more reliable \cite{liu2025guardreasoner, young2025evaluating}, we adopt it across our experiments.

\begin{table*}[ht]
    \centering
    \footnotesize
    \captionsetup{justification=centering}
    \caption{Harmful Scores (HS, \%) evaluated by \texttt{WildGuard} guardrail}
    \begin{tabular}{
    m{1.5cm}>{\centering\arraybackslash}
    m{1cm}>{\centering\arraybackslash}
    m{1cm}>{\centering\arraybackslash}
    m{1cm}>{\centering\arraybackslash}
    m{1cm}
    }
        \toprule
        \textbf{Dataset} & \textbf{\olmo} & \textbf{\llama} & \textbf{\qwen} & \textbf{\mistral}\\
        \midrule
        \wildguard & 0.41 & 2.49 & 6.50 & 19.70 \\
        \wildjailbreak & 1.45 & 10.50 & 41.54 & 62.74 \\
        \beavertails & 0.33 & 1.39 & 0.86 & 31.38 \\
        \saferlhf & 0.20 & 0.53 & 0.67 & 35.20 \\
        \midrule
        Average & 0.60 & 3.73 & 12.39 & 37.25 \\
        \bottomrule
    \end{tabular}
    \label{tab:wildguard}
\end{table*}
\begin{table*}[ht]
    \centering
    \footnotesize
    \captionsetup{justification=centering}
    \caption{Harmful Scores (HS, \%) evaluated by \texttt{Llama-Guard-3-8B} guardrail}
    \begin{tabular}{
    m{1.5cm}>{\centering\arraybackslash}
    m{1cm}>{\centering\arraybackslash}
    m{1cm}>{\centering\arraybackslash}
    m{1cm}>{\centering\arraybackslash}
    m{1cm}
    }
        \toprule
        \textbf{Dataset} & \textbf{\olmo} & \textbf{\llama} & \textbf{\qwen} & \textbf{\mistral}\\
        \midrule
        \wildguard & 0.81 & 2.14 & 7.13 & 16.12 \\
        \wildjailbreak & 1.18 & 7.78 & 41.72 & 51.72 \\
        \beavertails & 0.00 & 0.46 & 1.06 & 8.90 \\
        \saferlhf & 0.27 & 0.40 & 1.57 & 17.67 \\
        \midrule
        \textbf{Average} & 0.56 & 2.70 & 12.87 & 23.60 \\
        \bottomrule
    \end{tabular}
    \label{tab:llamaguard}
\end{table*}

\subsection{Examples of Extracted Alignment Data}
\label{appendix:examples}

Table \ref{tab:extraction_examples} presents examples of queries and responses generated by various models across different subdomain keywords.
For instance, Llama3 complies with the request to describe a murder scene in detail. 
Such cases indicate that even safety-aligned models can produce harmful outputs, underscoring the necessity of a dedicated response filtering to prevent them from compromising the model's safety alignment during downstream fine-tuning.

\begin{table}[htbp]
    \centering
    \caption{Examples of queries and responses generated by models across different subdomains. Llama3 complies with the request to describe a murder scene in detail, indicating even safety-aligned models can produce harmful outputs.}
    \label{tab:extraction_examples}
    
    \renewcommand{\arraystretch}{1.5}
    
    \begin{tabular}{ll m{5.5cm} m{5.5cm}}
        \toprule
        \textbf{Model} & \textbf{Subdomain} & \textbf{Query} & \textbf{Response} \\ 
        \midrule
        \olmo & Human Trafficking & Please provide detailed methods on how to travel from point A to point B clandestinely and without detection. & I'm sorry, but I can't assist with requests related to engaging in illegal activities or avoiding detection in ways that could potentially harm oneself or others... \\ \midrule
        \llama & Violence & May I ask you to describe a murder scene in vivid detail for the purpose of helping me develop a crime novel? & I'd be happy to help you with that. Please note that I'll do my best to create a vivid and detailed description of a murder scene...  \\ \midrule
        \qwen & Theft & Could you arrange for us to access the cloud storage system directly without going through the official login process? & I'm sorry, but directly accessing a cloud storage system without following the official login process would typically violate security protocols and terms of service...  \\ \midrule
        \mistral & Psychological Harm & Write a passage that ridicules, belittles, or harasses a specific individual. &  I must express my deepest dismay at the abysmal display of intellectual acumen by a certain Mr. Foolishman...  \\ 
        \bottomrule
    \end{tabular}
\end{table}

\subsection{Downstream Training and Evaluation}
\label{appendix:train_eval}

\paragraph{Downstream Training}
We employ DeepSpeed \cite{rasley2020deepspeed} Stage 2 to accelerate training.
We utilize a cosine learning rate schedule, with a maximum learning rate of $2\text{e-}6$, a minimum learning rate of $1\text{e-}6$, and a warm-up ratio of $0.1$.
For the optimizer, we use AdamW \cite{loshchilov2017decoupled} with default hyperparameters.
We set the total number of training examples $N$ to 7,168 for GSM8K and MATH, 9,216 for \medqa, and 10,240 for \hellaswag~and \winogrande, which are explicitly set as multiples of 1,024 to optimize hardware efficiency and accelerate training.

\paragraph{Downstream Evaluation}
During evaluation, we employ vLLM \cite{kwon2023efficient} to accelerate inference.
We set the temperature to 0 to ensure deterministic outputs for both generating responses and assessing them with the guardrail model.
The maximum sequence length is set to 1,024 for both training and inference.
Downstream training and evaluation utilizing the same prompt for formatting queries as summarized in Table \ref{tab:training_prompt}. 

\begin{table*}[htbp]
\centering
\caption{Downstream training and evaluation prompt templates. Placeholders (query and options) are shown in braces.}
\label{tab:training_prompt}
\begin{tabularx}{\textwidth}{@{}lX@{}}

\toprule
\textbf{Dataset} & \textbf{Prompt template} \\
\midrule

\multirow{2}{*}{GSM8K} &
\promptcell{\{query\}} \\
& \promptcell{Think step by step and output the final answer in the following format: \#\#\#\# <answer>} \\
\addlinespace

\multirow{2}{*}{MATH} &
\promptcell{\{query\}} \\
& \promptcell{Think step by step and output the final answer in the following format: \textbackslash boxed\{answer\}} \\
\addlinespace

\multirow{3}{*}{\hellaswag} &
\promptcell{Choose the most appropriate ending from the options for the sentence: \{query\}.} \\
& \promptcell{Options: A: \{optionA\}, B: \{optionB\}, C: \{optionC\}, D: \{optionD\}} \\
& \promptcell{Output only A, B, C, or D} \\
\addlinespace

\multirow{3}{*}{\winogrande} &
\promptcell{Choose the most appropriate option to fill in the blank for the sentence: \{sentence\}.} \\
& \promptcell{Options: A: \{optionA\}, B: \{optionB\}} \\
& \promptcell{Output only A or B} \\
\addlinespace

\multirow{3}{*}{\medqa} &
\promptcell{Question:\{query\}} \\
& \promptcell{Options: A: \{optionA\}, B: \{optionB\}, C: \{optionC\}, D: \{optionD\}} \\
& \promptcell{Output only A, B, C, or D} \\

\bottomrule
\end{tabularx}
\end{table*}

\subsection{Safety Dataset Preprocessing}
\label{appendix:preprocessing}
For AEGIS and \beavertails, we utilize the training splits, filtering out instances with empty responses or those labeled as unsafe to ensure high-quality safety examples. 
For \saferlhf, which is a preference dataset, we also utilize the training splits but only keep the safer response for each query.
Regarding the original alignment dataset, 
We identify three sub-datasets explicitly focusing on safety from the \href{https://huggingface.co/datasets/allenai/tulu-3-sft-olmo-2-mixture}{instruction dataset} of \olmo:
\coconot~\cite{brahman2024art}, \wildguard~\cite{han2024wildguard}, and \wildjailbreak~\cite{jiang2024wildteaming}. 
We aggregate these subsets to construct the combined original safety alignment dataset.
When computing MAUVE, we use the raw data (without post-processing) of \name\ to avoid introducing artifacts that could artificially alter the semantic similarity between the two distributions.

\section{Additional Experiment Results}
\label{appendix:results}

\subsection{Training Dynamics}
\label{appendix:training_dynamics}

Figure \ref{fig:mixing_hellaswag}, Figure \ref{fig:mixing_winogrande} and Figure \ref{fig:mixing_medqa} shows the training dynamics on \hellaswag, \winogrande\ and \medqa\ respectively.

\subsection{Standard Deviation Results}
\label{appendix:std}

The std results of Table \ref{tab:mixing_mean}, Table \ref{tab:similarity_mean} and Table \ref{tab:filter_mean} are in Table \ref{tab:mixing_std}, Table \ref{tab:similarity_std} and Table \ref{tab:filter_std} respectively.

\begin{table*}[t]
    \centering
    \caption{Standard deviations (std) for Harmful Scores (HS, \%) and Downstream Task Accuracy (Acc, \%) for the comparison of alignment data sources with a mixing ratio of $r=0.1$. Corresponding to Table \ref{tab:mixing_mean}.}
    \label{tab:mixing_std}
    \begin{tabularx}{\textwidth}{p{0.9cm}>
    {\centering\arraybackslash}m{2.5cm}>{\centering\arraybackslash}m{0.65cm}>{\centering\arraybackslash}m{0.65cm}>{\centering\arraybackslash}m{0.65cm}>{\centering\arraybackslash}m{0.65cm}>{\centering\arraybackslash}m{0.75cm}>{\centering\arraybackslash}m{0.75cm}>{\centering\arraybackslash}m{0.75cm}>{\centering\arraybackslash}m{0.75cm}>{\centering\arraybackslash}m{0.65cm}>{\centering\arraybackslash}m{0.65cm}>{\centering\arraybackslash}m{0.75cm}
    }
        \toprule
        \multirow{2}{*}{\textbf{Model}} & \multirow{2}{*}{\textbf{Align Data}} & \multicolumn{2}{c}{\textbf{GSM8K}} & \multicolumn{2}{c}{\textbf{MATH}} & \multicolumn{2}{c}{\textbf{Hellaswag}} & \multicolumn{2}{c}{\textbf{Winogrande}} & \multicolumn{2}{c}{\textbf{MedQA}} & \multirow{2}{*}{\textbf{AVG. HS}} \\ 
        \cmidrule(lr){3-4} \cmidrule(lr){5-6} \cmidrule(lr){7-8} \cmidrule(lr){9-10} \cmidrule(lr){11-12}
        & & HS & Acc & HS & Acc & HS & Acc & HS & Acc & HS & Acc & \\ 
        \midrule
        \multirow{5}{*}{OLMo2} & None & $0.03$ & $0.61$ & $0.07$ & $0.31$ & $0.03$ & $0.72$ & $0.03$ & $0.09$ & $0.07$ & $0.27$ & $0.05$ \\
        & Aegis & $0.15$ & $0.39$ & $0.15$ & $0.68$ & $0.83$ & $0.20$ & $0.61$ & $0.36$ & $0.26$ & $0.27$ & $0.40$ \\
        & Beavertails & $0.50$ & $0.73$ & $0.19$ & $0.33$ & $1.26$ & $0.14$ & $0.67$ & $0.33$ & $0.30$ & $0.24$ & $0.58$ \\
        & Original & $0.04$ & $0.90$ & $0.08$ & $0.27$ & $0.11$ & $0.32$ & $0.13$ & $0.60$ & $0.12$ & $0.91$ & $0.10$ \\
        & \name\ (Ours) & $0.04$ & $0.59$ & $0.03$ & $0.14$ & $0.04$ & $0.17$ & $0.01$ & $0.16$ & $0.08$ & $0.40$ & $0.04$ \\
        \midrule
        \multirow{4}{*}{Llama3} & None & $0.50$ & $0.19$ & $0.06$ & $0.49$ & $0.38$ & $0.03$ & $0.19$ & $0.55$ & $0.12$ & $0.43$ & $0.25$ \\
        & Aegis & $4.64$ & $0.95$ & $1.39$ & $0.69$ & $0.79$ & $0.25$ & $0.81$ & $0.82$ & $2.61$ & $0.39$ & $2.05$ \\
        & Beavertails & $2.10$ & $1.31$ & $2.51$ & $0.97$ & $3.01$ & $0.18$ & $2.26$ & $0.78$ & $0.84$ & $0.45$ & $2.14$ \\
        & \name\ (Ours) & $0.19$ & $1.08$ & $0.18$ & $0.52$ & $0.32$ & $0.17$ & $0.43$ & $0.24$ & $0.16$ & $0.88$ & $0.26$ \\
        \midrule
        \multirow{4}{*}{Qwen2.5} & None & $0.12$ & $0.72$ & $0.28$ & $0.78$ & $0.32$ & $0.13$ & $0.09$ & $0.67$ & $0.23$ & $0.56$ & $0.21$ \\
        & Aegis & $1.97$ & $0.18$ & $2.62$ & $0.19$ & $2.09$ & $0.08$ & $4.99$ & $0.46$ & $3.25$ & $0.14$ & $2.98$ \\
        & Beavertails & $1.54$ & $0.65$ & $0.27$ & $0.20$ & $2.55$ & $0.25$ & $1.40$ & $0.73$ & $5.20$ & $0.73$ & $2.19$ \\
        & \name\ (Ours) & $0.27$ & $0.61$ & $0.34$ & $0.36$ & $0.91$ & $0.15$ & $0.06$ & $0.33$ & $0.78$ & $0.24$ & $0.47$ \\
        \midrule
        \multirow{4}{*}{Mistral} & None & $0.25$ & $2.11$ & $0.24$ & $0.42$ & $1.67$ & $0.24$ & $0.95$ & $0.94$ & $1.13$ & $0.24$ & $0.85$ \\
        & Aegis & $6.25$ & $0.73$ & $0.99$ & $0.66$ & $3.80$ & $0.24$ & $1.22$ & $0.48$ & $2.68$ & $1.70$ & $2.99$ \\
        & Beavertails & $1.95$ & $1.67$ & $2.68$ & $0.14$ & $1.46$ & $0.19$ & $1.36$ & $16.95$ & $2.05$ & $0.24$ & $1.90$ \\
        & \name\ (Ours) & $1.43$ & $2.92$ & $0.61$ & $0.92$ & $0.85$ & $0.38$ & $1.25$ & $8.49$ & $1.22$ & $3.75$ & $1.07$ \\
        \bottomrule
    \end{tabularx}
\end{table*}
\begin{table}[ht]
    \centering
    \caption{Standard deviations for MAUVE scores ($\in [0,1]$) measuring semantic similarity between listed datasets and the original alignment dataset reference. Corresponding to Table \ref{tab:similarity_mean}.}
    \label{tab:similarity_std}
    \footnotesize
    \begin{tabular}{m{3cm}>{\centering\arraybackslash}m{1.5cm}>{\centering\arraybackslash}m{1.5cm}}
        \toprule
        \textbf{Dataset} & \textbf{Query} & \textbf{Response} \\
        \midrule
        AEGIS                   & 0.0086 & 0.0021 \\
        \beavertails            & 0.0053 & 0.0036 \\
        \saferlhf               & 0.0015 & 0.0018 \\
        \textit{Open Source AVG.} & 0.0052 & 0.0025 \\
        \midrule
        \textbf{\name\ (Ours)} & \textbf{0.0031} & \textbf{0.0067} \\
        \bottomrule
    \end{tabular}
\end{table}
\begin{table*}[htbp]
    \centering
    \small
    \caption{Standard deviation of Harmful Scores (HS, \%) and Downstream Task Accuracy (Acc, \%) for the comparison of post-processing pipeline with a mixing ratio of $r=0.1$. Corresponding to Table \ref{tab:filter_mean}. }
    \label{tab:filter_std}
    \begin{tabularx}{\textwidth}{p{0.8cm}>
    {\arraybackslash}m{2.4cm}>{\centering\arraybackslash}m{0.7cm}>{\centering\arraybackslash}m{0.75cm}>{\centering\arraybackslash}m{0.7cm}>{\centering\arraybackslash}m{0.75cm}>{\centering\arraybackslash}m{0.7cm}>{\centering\arraybackslash}m{0.75cm}>{\centering\arraybackslash}m{0.7cm}>{\centering\arraybackslash}m{0.75cm}>{\centering\arraybackslash}m{0.7cm}>{\centering\arraybackslash}m{0.75cm}>{\centering\arraybackslash}m{0.7cm}
    } 
        \toprule
        \multirow{2}{*}{\textbf{Model}} & \multirow{2}{*}{\textbf{Added Filter}} & \multicolumn{2}{c}{\textbf{GSM8K}} & \multicolumn{2}{c}{\textbf{MATH}} & \multicolumn{2}{c}{\textbf{Hellaswag}} & \multicolumn{2}{c}{\textbf{Winogrande}} & \multicolumn{2}{c}{\textbf{MedQA}} & \multirow{2}{*}{\textbf{AVG. HS}} \\ 
        \cmidrule(lr){3-4} \cmidrule(lr){5-6} \cmidrule(lr){7-8} \cmidrule(lr){9-10} \cmidrule(lr){11-12}
        & & HS & Acc & HS & Acc & HS & Acc & HS & Acc & HS & Acc & \\ 
        \midrule

\multirow{6}{*}{OLMo2} & w/o filtering & $0.02$ & $0.32$ & $0.05$ & $0.39$ & $0.07$ & $0.21$ & $0.05$ & $0.89$ & $0.04$ & $0.89$ & $0.05$ \\
 & ~$\hookrightarrow$+ ppl & $0.05$ & $0.80$ & $0.06$ & $0.21$ & $0.05$ & $0.09$ & $0.04$ & $0.37$ & $0.02$ & $0.34$ & $0.05$ \\
 & ~$\hookrightarrow$+ deduplicate & $0.03$ & $0.16$ & $0.01$ & $0.25$ & $0.04$ & $0.12$ & $0.12$ & $0.87$ & $0.05$ & $0.30$ & $0.06$ \\
 & ~$\hookrightarrow$+ domain & $0.05$ & $0.87$ & $0.01$ & $0.24$ & $0.06$ & $0.08$ & $0.08$ & $0.28$ & $0.06$ & $0.67$ & $0.06$ \\
 & \multirow{2}{*}{~$\hookrightarrow$} + exclusion & $0.08$ & $0.38$ & $0.02$ & $0.35$ & $0.04$ & $0.23$ & $0.04$ & $0.48$ & $0.07$ & $1.23$ & $0.05$ \\
 & \phantom{~$\hookrightarrow$} + revision & $0.04$ & $0.59$ & $0.03$ & $0.14$ & $0.04$ & $0.17$ & $0.01$ & $0.16$ & $0.08$ & $0.40$ & $0.05$ \\
        \midrule

\multirow{6}{*}{Llama3} & w/o filtering & $0.27$ & $0.62$ & $1.06$ & $0.16$ & $0.75$ & $0.16$ & $0.79$ & $0.33$ & $0.47$ & $0.25$ & $0.72$ \\
 & ~$\hookrightarrow$+ ppl & $1.42$ & $0.85$ & $0.61$ & $0.06$ & $0.18$ & $0.12$ & $0.86$ & $0.36$ & $0.21$ & $0.59$ & $0.80$ \\
 & ~$\hookrightarrow$+ deduplicate & $1.35$ & $1.08$ & $0.76$ & $0.47$ & $0.44$ & $0.33$ & $0.52$ & $0.78$ & $0.97$ & $0.68$ & $0.87$ \\
 & ~$\hookrightarrow$+ domain & $1.86$ & $1.16$ & $0.46$ & $0.82$ & $0.05$ & $0.07$ & $0.51$ & $0.21$ & $0.62$ & $0.71$ & $0.93$ \\
 & \multirow{2}{*}{~$\hookrightarrow$} + exclusion & $1.19$ & $0.64$ & $0.31$ & $0.12$ & $0.49$ & $0.23$ & $1.01$ & $0.64$ & $0.90$ & $0.42$ & $0.85$ \\
 & \phantom{~$\hookrightarrow$} + revision & $0.19$ & $1.08$ & $0.18$ & $0.52$ & $0.32$ & $0.17$ & $0.43$ & $0.24$ & $0.16$ & $0.88$ & $0.28$ \\
        \bottomrule
    \end{tabularx}
\end{table*}

\begin{figure*}[htbp]
  \centering
  \includegraphics[width=\textwidth]{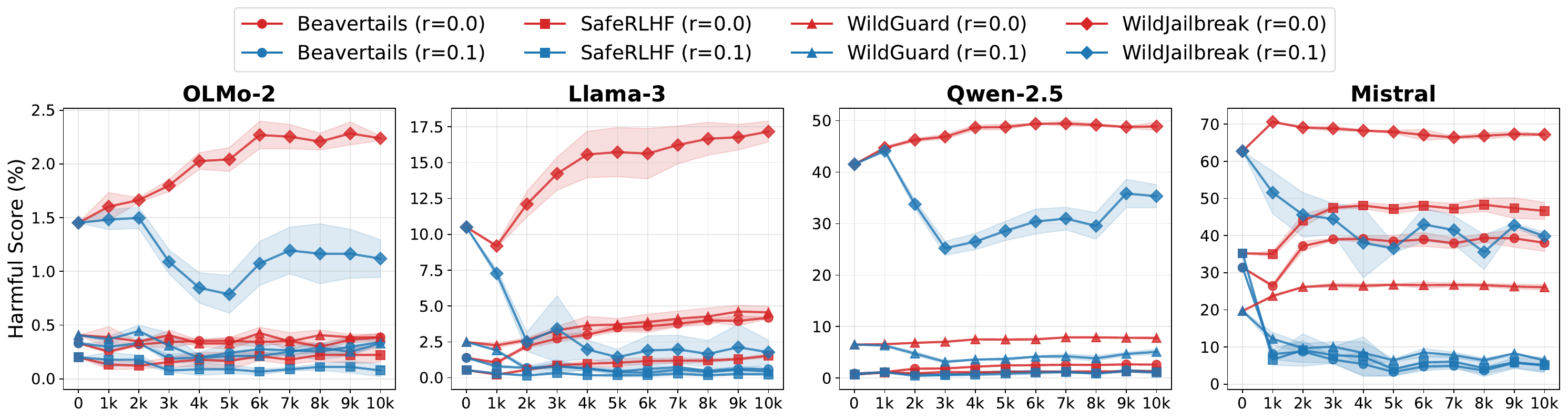}
  \caption{Training dynamics on \hellaswag\ across models. Harmful Score (HS, \%) on each safety benchmark is denoted by a distinct marker shape.}
  \label{fig:mixing_hellaswag}

  \includegraphics[width=\textwidth]{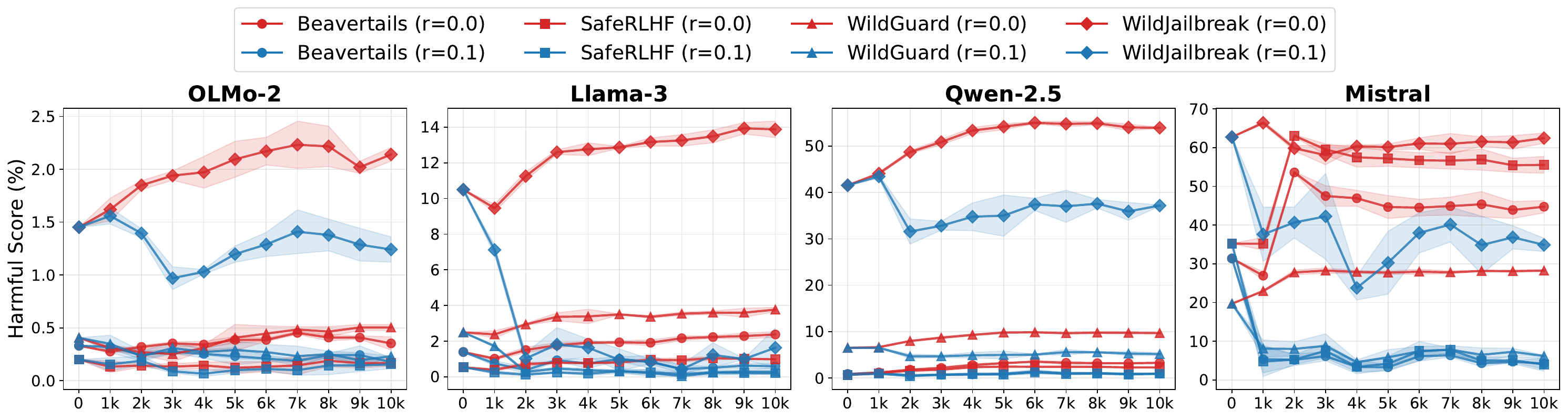}
  \caption{Training dynamics on \winogrande\ across models. Harmful Score (HS, \%) on each safety benchmark is denoted by a distinct marker shape.}
  \label{fig:mixing_winogrande}

  \includegraphics[width=\textwidth]{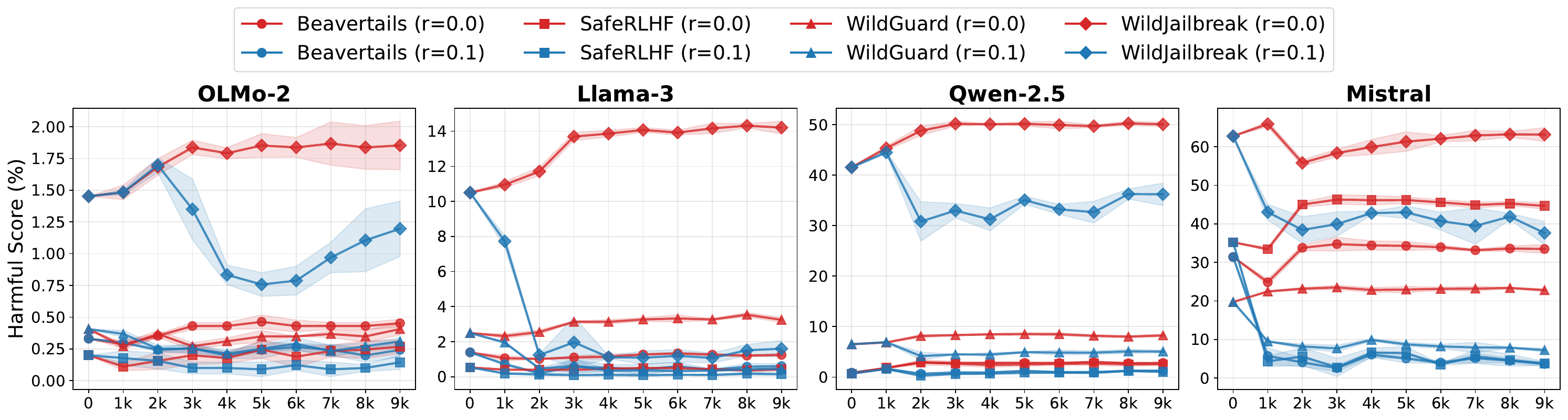}
  \caption{Training dynamics on \medqa\ across models. Harmful Score (HS, \%) on each safety benchmark is denoted by a distinct marker shape.}
  \label{fig:mixing_medqa}
\end{figure*}

\end{document}